\def\eqref#1{equation~\ref{#1}}
\def\1{\bm{1}}
\def\va{{\bm{a}}}
\def\vb{{\bm{b}}}
\def\vc{{\bm{c}}}
\def\vq{{\bm{q}}}
\def\vu{{\bm{u}}}
\def\vv{{\bm{v}}}
\def\vw{{\bm{w}}}
\def\vx{{\bm{x}}}
\def\vy{{\bm{y}}}
\def\vz{{\bm{z}}}
\def\mU{{\bm{U}}}
\def\mW{{\bm{W}}}
\DeclareMathAlphabet{\mathsfit}{\encodingdefault}{\sfdefault}{m}{sl}
\SetMathAlphabet{\mathsfit}{bold}{\encodingdefault}{\sfdefault}{bx}{n}
\def\sN{{\mathbb{N}}}
\def\sR{{\mathbb{R}}}
\def\sS{{\mathbb{S}}}
\def\sX{{\mathbb{X}}}
\def\sY{{\mathbb{Y}}}
\newcommand{\KL}{D_{\mathrm{KL}}}
\DeclareMathOperator*{\argmin}{arg\,min}
\newtheorem{lemma}{Lemma}
\newcolumntype{x}[1]{>{\centering\arraybackslash\hspace{0pt}}p{#1}}
\newcommand{\indep}{\rotatebox[origin=c]{90}{$\models$}}
\newcommand{\iid}{\overset{\textnormal{i.i.d}}{\sim}}
\newcommand{\disc}{\textnormal{disc}}
\def\one{\mbox{1\hspace{-4.25pt}\fontsize{12}{14.4}\selectfont\textrm{1}}}
\tikzstyle{b} = [rectangle, draw, fill=blue!20, node distance=3cm, text width=6em, text centered, rounded corners, minimum height=4em, thick]
\tikzstyle{c} = [rectangle, draw, inner sep=0.5cm, dashed]
\tikzstyle{l} = [draw, -latex',thick]
\title{Emerging Disentanglement in Auto-Encoder Based Unsupervised Image Content Transfer}
\author{Ori Press, Tomer Galanti \& Sagie Benaim \\
The School of Computer Science\\
Tel Aviv University\\
\texttt{oripress@mail.tau.ac.il,tomerga2@post.tau.ac.il,sagieb@mail.tau.ac.il} \\
\And
Lior Wolf \\
Facebook AI Research \& \\
The School of Computer Science\\
Tel Aviv University\\
\texttt{wolf@fb.com}\\
\texttt{wolf@cs.tau.ac.il} \\
}
\begin{document}

\maketitle

\begin{abstract}

We study the problem of learning to map, in an unsupervised way, between domains $A$ and $B$, such that the samples $\vb \in B$ contain all the information that exists in samples $\va\in A$ and some additional information. For example, ignoring occlusions, $B$ can be people with glasses, $A$ people without, and the glasses, would be the added information. When mapping a sample $\va$ from the first domain to the other domain, the missing information is replicated from an independent reference sample $\vb\in B$. Thus, in the above example, we can create, for every person without glasses a version with the glasses observed in any face image. 

Our solution employs a single two-pathway encoder and a single decoder for both domains. The common part of the two domains and the separate part are encoded as two vectors, and the separate part is fixed at zero for domain $A$. The loss terms are minimal and involve reconstruction losses for the two domains and a domain confusion term. Our analysis shows that under mild assumptions, this architecture, which is much simpler than the literature guided-translation methods, is enough to ensure disentanglement between the two domains. We present convincing results in a few visual domains, such as no-glasses to glasses, adding facial hair based on a reference image, etc.

\end{abstract}

\section{Introduction}

In the problem of unsupervised domain translation, the algorithm receives two sets of samples, one from each domain, and learns a function that maps between a sample in one domain to the analogous sample in the other domain~\citep{CycleGAN2017,discogan,dualgan,distgan,cogan,unit,stargan,conneau2017word,zhang2017adversarial,zhang2017earth,lample2018unsupervised}. The term unsupervised means, in this context, that the two sets are unpaired. 

In this paper, we consider the problem of domain $B$, which contains a type of content that is not present at $A$. As a running example, we consider the problem of mapping between a face without eyewear (domain $A$) to a face with glasses (domain $B$). While most methods would map to a person with any glasses, our solution is guided and we attach to an image $\va \in A$, the glasses that are present in a reference image $\vb \in B$.

In comparison to other guided image to image translation methods, our method is considerably simpler. It relies on having a latent space with two parts: (i) a shared part that is common to both $A$ and $B$, and (ii) a specific part that encodes the added content in $B$. By setting the second part to be the zero vector for all samples in $A$, a disentanglement emerges. Our analysis shows that this leads to the ability to train with a simple, straightforward domain confusion term, while enjoying the generalization guarantees that would otherwise require a more elaborate loss.

\subsection{Previous Work}

In image to image translation, the transformations are often captured by multi-layered networks of an encoder-decoder architecture.  
The existing solutions often assume a one to one mapping between the domains, i.e., that there exists a function $y$ such that given a sample $\va$ in domain $A$, maps it to an analog sample in domain $B$. In fact, the circularity based constraints by~\cite{CycleGAN2017,discogan,dualgan} are based on this assumption, since going from one domain to the other and back, it is assumed that the original sample is obtained, which requires no loss of information. However, to employ an example made popular by~\cite{CycleGAN2017}, when going from a zebra to a horse, the stripes are lost, which results in an ambiguity when mapping in the other direction. 

This shortcoming was identified by the literature, and a few contributions present many to many mappings. These include the augmented CycleGAN~\cite{almahairi2018augmented}, which adds a random vector to each domain. A completely different approach is taken by the NAM method~\citep{nam}, in which multiple solutions are obtained by considering multiple random initializations. In our work, multiplicity of outcomes arises from using a reference (guide) image.

A powerful way to capture relations between the two domains, is by employing separate autoencoders for the two domains, but which share many of the weights~\citep{cogan,unit}. This leads to a shared representation: while the low-level image properties, such as texture and color, are domain-specific and encoded/decoded separately, the mid- and top-level properties are common to both domains and are processed by identical replicas of the same layers. In our work, we rely on a shared encoder for both domains in order to enforce a shared representation.

Our method performs one-sided mapping, from $A$ to $B$, and does not learn the mapping in the other direction. ~\cite{distgan} perform one-sided mapping using a distance based constraint, which we do not employ. That method is symmetric in the two domains and in many experiments, the distance constraint is used in tandem with the cycle constraint. In our case, the two domains are asymmetric and one domain contains an added content. Our method is inherently asymmetric, reflecting the asymmetry between the source image and the guide image, which contains the additional content. The work by~\cite{02200,nam} map in an asymmetric way, but rely on the existence of a perceptual distance, which we do not employ. 

\paragraph{Guided Translation}

\begin{table}[t]
\caption{A comparison to other unsupervised guided image to image translation methods. $^\dagger$$k=5$ is the number of pre-segmented face parts. $^\ddagger$Used for domain confusion, not on the output.}
\label{tab:comparisontootherguided}
\begin{center}
\begin{tabular}{ll@{~~~}x{1.2cm}@{~~~}x{1.5cm}@{~~~}x{1.2cm}@{~~~}x{1.5cm}@{~~~}x{1.2cm}@{~~~}x{1.2cm}} 
\toprule
& &MUNIT (Huang, 2018)
&EG-UNIT (Ma, 2018)
&DRIT (Lee, 2018)
&PairedCycleGAN (Chang'18)
&Our
\\ 
\midrule
\multirow{4}{*}{\rotatebox[origin=c]{90}{%
    \parbox{1.5cm}{%
        {Sharing pattern}
        }%
        }} 
&Shared layers          &   & + & + &  &  \\
&Shared latent Space    & + &   & + &  & +\\
&Shared encoder         &   &   &   &  & +\\
&Shared decoder         &   &   &   &  & +\\
\midrule
\multirow{4}{*}{\rotatebox[origin=c]{90}{%
    \parbox{1.56cm}{%
        {Number of networks}%
        }%
        }} 
&Encoders       & 4 & 4  & 4 &              & 2\\
&Generators     & 2 & 2  & 2 & 2k$^\dagger$ & 1\\
&Discriminators & 2 & 2  & 2 & k$^\dagger$  & 1$^\ddagger$\\
&Other          &   & 2  &   &              & \\
\bottomrule
\end{tabular}
\end{center}
\end{table}

The most relevant work contains very recent and concurrent methods in which the mapping between the domains employs two inputs: a source image $\va$ and a reference (guide or attribute) image $\vb$. Tab.~\ref{tab:comparisontootherguided} compares these methods to ours along two axes: (i) where sharing occurs in the architecture, and (ii) the number of trained sub-networks of each type. The sharing can be of layers between different encoders and decoders, following, e.g.,~\cite{cogan}; sharing of a common part of a latent space; and using the same encoder or decoder for multiple domains. The networks are of four types: encoders, which map images to a latent space, generators (also known as decoders), which generate images from a latent representation, discriminators that are used as part of an adversarial loss, and other, less-standard, networks. 

It is apparent that our method is considerably simpler than the literature methods. The main reason is that our method is based on the emergence of disentanglement, as detailed in Sec.~\ref{sec:analysis}. This allows us to to train with many less parameters and without the need to apply excessive tuning, in order to balance or calibrate the various components of the compound loss.

The MUNIT architecture by~\cite{munit}, like our architecture, employs a shared latent space, in addition to a domain specific latent space. Their architecture is not limited to two domains\footnote{Ours method can be readily extended to multiple target domains $B_1,\dots,B_k$, but this is not explored here.} and unlike ours, employs separate encoders and decoders for the various domains. The type of guiding that is obtained from the target domain in MUNIT is referred to as style, while in our case, the guidance provides content. Therefore, MUNIT, as can be seen in our experiments, cannot add specific glasses, when shifting from the no-glasses domain to the faces with eyewear domain.

The EG-UNIT architecture by~\cite{ma2018exemplar} presents a few novelties, including an adaptive method of masking-out a varying set of the features in the shared latent space. In our latent representation of domain $A$, some of the features are constantly zero, which is much simpler. This method also focuses on guiding for style and not for content, as is apparent form their experiments.

The very recent DRIT work by~\cite{Lee_2018_ECCV} learns to map between two domains using a disentangled representation. Unlike our work, this work  seems to focus on style rather than content. The proposed solution differs from us in many ways: (1) it relies on two-way mapping, while we only map from $A$ to $B$. 
(2) it relies on shared weights in order to ensure that the common representation is shared. (3) it adds a VAE-like~\citep{vae} statistical characterization of the latent space, which results in the ability to sample random attributes. As can be seen in Tab.~\ref{tab:comparisontootherguided}, the solution of~\cite{Lee_2018_ECCV} is considerably more involved than our solution.

DRIT (and also MUNIT) employ two different types of encoders that enforce a separation of the latent space representations to either style or content vectors. For example, the style encoder, unlike the content encoder, employs spatial pooling and it also results in a smaller representation than the content one. This is important, in the context of these methods, in order to ensure that the two representations encode different aspects of the image. If DRIT or MUNIT were to use the same type of encoder twice, then one encoder could capture all the information, and the image-based guiding (mixing representations from two images) would become mute. In contrast, our method (i) does not separate style and content, and (ii) has a representation that is geared toward capturing the additional content.

The work most similar to us in its goal, but not in method, is the PairedCycleGAN work by~\cite{Chang:2018:PAS}. This work explores the single application of applying the makeup of a reference face to a source face image. Unfortunately, the method was only demonstrated on a proprietary unshared dataset and the code is also not publicly available, making a direct comparison impossible at this time. The method itself is completely different from ours and does not employ disentanglement. Instead, a generator with two image inputs is used to produce an output image, where the makeup is transfered between the input images,  and a second generator is trained to remove makeup. The generation is done separately to $k=5$ pre-segmented facial regions, and the generators do not employ an encoder-decoder architecture. 

Lastly, there are guided methods, which are trained in the supervised domain, i.e., when there are matches between domain $A$ and $B$. Unlike the earlier one-to-one work, such as pix2pix~\cite{pix2pix}, these methods produce multiple outputs based on a reference image in the target domain. Examples include the Bicycle GAN by~\cite{zhu2017toward}, who also applied, as baseline in their experiments, the methods of~\cite{bao2017cvae,gonzalez2018image}.

\paragraph{Other Disentanglement Work}

InfoGAN~\citep{infogan} learns a representation in which, due to the statistical properties of the representations, specific classes are encoded as a one-hot encoding of part of the latent vector. In the work of~\cite{lample2017fader,naama}, the representation is disentangled by reducing the class based information within it. The separate class based information is different in nature from our multi-dimensional added content.~\cite{cao2018dida}, which builds upon~\cite{naama}, performs guided image to image translation, but assumes the availability of class based information, which we do not.

\section{Problem Setup}\label{sec:setup}

We consider a setting with two domains $A = (\sX_A,D_A)$ and $B = (\sX_B,D_B)$. Here, $\sX_A,\sX_B \subset \mathbb{R}^M$ and $D_A,D_B$ are distributions over them (resp.). The algorithm is provided with two independent datasets $\sS_A = \{\va^i\}^{m_1}_{i=1}$ and $\sS_B = \{\vb^j\}^{m_2}_{j=1}$ of samples from the two domains that were sampled in the following manner:
\begin{equation}
\sS_A \iid D^{m_1}_A \textnormal{ and } \sS_B \iid D^{m_2}_B
\end{equation}
We denote, $D_{A,B} := D_A \times D_B$ the distribution of sampling $(\va,\vb)$ for $\va \sim D_A$ and $\vb \sim D_B$ independently. We assume a generative model, in which $\vb$ is specified by a sample $\va$ and a specification $\vc$ from a third unknown domain $C = (\sX_C,D_C)$ of {\em specifications} where $D_C$ is a distribution over the metric space $\sX_C \subset \mathbb{R}^{N}$. Formally, there is an invertible function $u(\vb) = (u_1(\vb),u_2(\vb)) \in \sX_A \times \sX_C$ that takes a sample $\vb \in \sX_B$ and returns the content $u_1(\vb)$ of $\vb$ and the specification $u_2(\vb)$ of $\vb$. The goal is to learn a target function $y:\sX_A \times \sX_B \to \sX_B$ such that:
\begin{equation}\label{eq:eq2}
y(\va,\vb) \sim D_B \textnormal{ where: } \va\sim D_A, \vb \sim D_B, \va \indep \vb \textnormal{ and } u(y(\va,\vb)) = (\va,u_2(\vb))
\end{equation}
Informally, the function $y$ takes two samples $\va$ and $\vb$ and returns the analog of $\va$ in $B$ that has the specification of $\vb$. For example, $A$ is the domain of images of persons, $B$ is the domain of images of persons with sunglasses and $C$ is the domain of images of sunglasses. The function $y$ takes an image of a person and an image of a person with sunglasses and returns an image of the first person with the specified sunglasses. For simplicity, we assume that the target function is extended to inputs $(\vb_1,\vb_2) \in \sX^2_B$ and $u(y(\vb_1,\vb_2)) = (u_1(\vb_1),u_2(\vb_2))$. In other words, $\vb_1$ and $\vb_2$ are mapped to a third $\vb$ that has the content of $\vb_1$ and the specification of $\vb_2$. In particular, $u(y(\vb,\vb)) = (u_1(\vb),u_2(\vb)) = u(\vb)$ and, therefore, $y(\vb,\vb) = \vb$. 

Note that within Eq.~\ref{eq:eq2}, there is an assumption on the underlying distributions $D_A$ and $D_B$. Using the concrete example, our framework assumes that the distribution of images of persons with sunglasses and the distribution of images of persons without them is the same, except for the sunglasses. Otherwise, the distribution of the samples generated by $y$ when resampling $\va$ would not be the same as $D_B$. Note that we do not enforce this assumption on the data, and only employ it for our theoretical results, to avoid additional terms.

For two functions $f_1,f_2:\sX \to \mathbb{R}$ and a distribution $D$ over $\sX$, we define the generalization risk between $f_1$ and $f_2$ as follows:
\begin{equation}
R_{D}[f_1,f_2] := \mathbb{E}_{\vx \sim D}[\ell(f_1(\vx),f_2(\vx))]
\end{equation}
For a loss function $\ell: \mathbb{R}^M \times \mathbb{R}^M \to [0,\infty)$. Typically, we use the $L_1(\vu,\vv) := \| \vu-\vv\|_1$ or $L_2(\vu,\vv) := \|\vu-\vv\|^2_2$ losses. The goal of the algorithm is to return a hypothesis $h \in \mathcal{H}$, such that $h:\sR^M \times \sR^M \to \sR^M$, that minimizes the generalization risk,
\begin{equation}
R_{D_{A,B}}[h,y] = \mathbb{E}_{(\va,\vb) \sim D_{A,B}}[\ell(h(\va,\vb),y(\va,\vb))]
\end{equation}
This quantity measures the expected loss of $h$ in mapping two samples $\va \sim D_A$ and $\vb \sim D_B$ to the analog $y(\va,\vb)$ of $\va$, that has the specification $u_2(\vb)$ of $\vb$. The main challenge is that the algorithm does not observes paired examples of the form $((\va,\vb),y(\va,\vb))$ as a direct supervision for learning the mapping $y:\sX_A \times \sX_B \to \sX_B$.

\section{Method}\label{sec:method}

In order to learn the mapping $y$, we only use an encoder-decoder architecture, in which the encoder receives two input samples and the decoder produces a single output sample that borrows from both input samples. As we discuss in Sec.~\ref{sec:setup}, the goal of the algorithm is to learn a mapping $h = g\circ f \in \mathcal{H}$ such that: $g \circ f(\va,\vb) \approx y(\va,\vb)$. Here, $f$ serves as an encoder and $g$ as a decoder. The encoder $f$ in our framework is a member of a set of encoders $\mathcal{F}$, each decomposable into two parts and takes the following form:
\begin{equation}
f(\va,\vb) = (e_1(\va ),e_2(\vb))
\end{equation}
where $e_1:\mathbb{R}^M \to \mathbb{R}^{E_1}$ serves as an encoder of shared content and $e_2:\mathbb{R}^M \to \mathbb{R}^{E_2}$ serves as an encoder of specific content. Here, $E_1$ and $E_2$ are the dimensions of the encodings. The decoder, $g$ is a member of a set of decoders $\mathcal{M}$. Each member of $\mathcal{M}$ is a function $g:\sR^{E_1+E_2} \to \sR^M$. In order to learn the functions $f$ and $g$, we apply the following min-max optimization:
\begin{equation}
\min_{f \in \mathcal{F},g \in \mathcal{M}} \max_{d \in \mathcal{C}} \left\{ \mathcal{L}_A + \mathcal{L}_B - \lambda \mathcal{L}_D \right\} ,
\end{equation}
for some weight parameter $\lambda>0$, of the following training losses (see Fig.~\ref{fig:illustration}):
\begin{align}
\mathcal{L}_A = & \frac{1}{m_1}\sum_{\va\in \sS_A} \| g(e_1(\va),0_{E_2})- \va \label{eq:loss1} \|_1 \\
\mathcal{L}_B = & \frac{1}{m_2}\sum_{\va\in \sS_B} \| g(e_1(\vb),e_2(\vb)) - \vb\|_1 \label{eq:loss2}\\
\mathcal{L}_D = & \frac{1}{m_1} \sum_{\va\in \sS_A} l(d(e_1(\va)),0)+ \frac{1}{m_2}\sum_{\vb\in \sS_B} l(d(e_1(\vb)),1) \label{eq:loss3}
\end{align}
where $0_{E_2}$ is the vector of zeros of length $E_2$, $d$ is a discriminator network, and  $l(p,q) = -(q\log(p)+(1-q)\log(1-p))$ is the binary cross entropy loss for $p\in\left[0,1\right]$ and $q\in\{0, 1\}$. The discriminator $d$ is a member of a set of discriminators $\mathcal{C}$ that locates functions $d:\sR^{M} \to [0,1]$.

The discriminator $d$ is trained to  minimize $\mathcal{L}_D$ and  Eq.~\ref{eq:loss3} is a domain confusion term~\citep{ganin2016domain} encouraging the distribution $e_1 \circ D_A$ to be similar to the distribution $e_1 \circ D_B$. Here and elsewhere, the composition $f\circ D$ of a function $f$ and a distribution $D$ denotes the distribution of $f(\vx)$ for $\vx \sim D$.

\usetikzlibrary{decorations.pathreplacing}

\begin{figure}
\begin{minipage}[c]{0.43\linewidth}
\begin{tikzpicture}[auto,scale=0.8]
    \node (e1a) {\begin{scriptsize}$e_1(\va) \sim e_1\circ D_A$\end{scriptsize}};
    \node (e1b) at ([shift={(0,-1.5)}] e1a) {\begin{scriptsize}$e_1(\vb) \sim e_1\circ D_B$\end{scriptsize}}; 
    \node (in1) at ([shift={(-3.5,0)}] e1a)   {\begin{scriptsize}$\va \sim D_{A}$\end{scriptsize}};
    \node (in11) at ([shift={(-4,0)}] e1a)   {};
    \node (in2) at ([shift={(-3.5,-1.5)}] e1a) {\begin{scriptsize}$\vb \sim D_B$\end{scriptsize}};
    \node (in22) at ([shift={(-4,-1.5)}] e1a)   {};
    \node (zero) at ([shift={(0,0.7)}] e1a) {\begin{scriptsize}$0$\end{scriptsize}};
    
    \node (e2b) at ([shift={(0,-2.2)}] e1a) {\begin{scriptsize}$e_2(\vb)$
    \end{scriptsize}};
    
\draw [decorate,decoration={brace,amplitude=5pt},xshift=-2pt,yshift=0pt] (1.4,1) -- (1.4,-0.3) node (repA) [black,midway,xshift=9pt] {\begin{scriptsize}$f(\va,\va)$\end{scriptsize}};

\draw [decorate,decoration={brace,amplitude=5pt},xshift=-2pt,yshift=0pt] (1.4,-1.2) -- (1.4,-2.5) node (repB) [black,midway,xshift=9pt] {\begin{scriptsize}$f(\vb,\vb)$\end{scriptsize}};
    \draw[->,draw=red] (e1a) -- (e1b) node[midway] {\begin{scriptsize}disc\end{scriptsize}};
    \draw[->,draw=red] (e1b) -- (e1a);

    \draw[->] (in1) -- (e1a) node[midway] {\begin{scriptsize}$e_1$\end{scriptsize}};
    
    \draw[->] (in2) -- (e1b) node[midway] {\begin{scriptsize}$e_1$\end{scriptsize}};
    
    \path [l] (in1.north) -- ++(0,0.41)  -- node[pos=0.5] {\begin{scriptsize}$0$\end{scriptsize}} (zero.west) ;
    
    \path [l] (in2.south) -- ++(0,-0.41)  -- node[pos=0.5] {\begin{scriptsize}$e_2$\end{scriptsize}} (e2b.west) ;
    
    \path [l,draw=red] (repA.north) -- ++(0,1)  -- node[pos=0.5] {\begin{scriptsize}$g$\end{scriptsize}} ++(-6.42,0) --  (in11.north);
    \path [l,draw=red] (repB.south) -- ++(0,-1)  -- node[pos=0.5] {\begin{scriptsize}$g$\end{scriptsize}} ++(-6.42,0) --  (in22.south);
\end{tikzpicture}

\end{minipage}%
\hfill
\begin{minipage}[c]{0.54\linewidth}
\caption{An illustration of the domains and functions employed in our work. $D_A$ and $D_B$ are the distributions of images in the two domains. $e_1$ and $e_2$ the two pathways of the encoder, $g$ is the decoder, which is applied to $f$, which aggregates the output of the two encoder pathways. 
There are only three constraints used while training, shown in red: (i) a reconstruction loss in domain $A$, comparing $g\circ f(\va,\va)$ with a, (ii) a reconstruction loss in domain $B$, and (iii) a measure of the discrepancy between the distributions $e_1 \circ D_A$ and $e_1 \circ D_B$, measured with a domain confusion term.\vspace{0.65cm}}
\label{fig:illustration}
\end{minipage}
\end{figure}
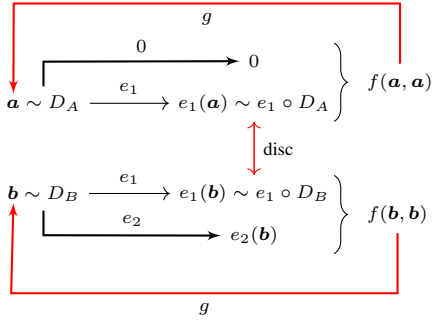

\section{Analysis} 
\label{sec:analysis}

In this section, we provide a theoretical analysis for the success of the proposed method. For this purpose, we recall a few technical notations~\citep{Cover:2006:EIT:1146355}:
the expectation and probability operators symbols $\mathbb{E}, \mathbb{P}$, the Shannon entropy (discrete or continuous) $H(X) := -\mathbb{E}_{X}[ \log \mathbb{P}[X]]$, the conditional entropy $H(X|Y) := H(X,Y) - H(Y)$, the (conditional) mutual information (discrete or continuous) $I(X; Y| Z) := H(X|Z) - H(X|Y,Z)$, the Kullback-Leibler (KL) divergence $\KL(p\|q) := \mathbb{E}_{\vx\sim p}[\log (p(\vx)/q(\vx))]$,
and the total correlation $TC(\vz) := \KL\left(\mathbb{P}[\vz]\|\prod_i \mathbb{P}[\vz_i] \right)$, where $\mathbb{P}[\vz_i]$ is the marginal distribution of the $i$'th component of $\vz$. In particular, $TC(\vz)$ is zero if and only if the components of $\vz$ are independent, in which case we say that $\vz$ is disentangled. For two distributions $D_1$ and $D_2$, we define the $\mathcal{C}$-discrepancy between them to be $\disc_{\mathcal{C}}(D_1,D_2) := \sup_{c_1,c_2 \in \mathcal{C}} \vert R_{D_1}[c_1,c_2] - R_{D_2}[c_1,c_2]\vert = \sup_{c_1,c_2} |\mathbb{E}_{x \sim D_1} \ell(c_1(\vx),c_2(\vx)) - \mathbb{E}_{x \sim D_2} \ell(c_1(\vx),c_2(\vx))|$. The discrepancy behaves as an adversarial distance measure between two distributions, where $d(\vx) = \ell(c_1(\vx),c_2(\vx))$ is the discriminator that tries to differentiate between $D_1$ and $D_2$, for $c_1,c_2 \in \mathcal{C}$. This quantity is being employed in~\cite{Chazelle:2000:DMR:507108,bendavid,DBLP:conf/colt/MansourMR09,Cortes2014DomainAA}.

\subsection{Generalization Bound}\label{sec:genBound}

Thm.~\ref{thm:disent} upper bounds the generalization risk, based on terms that can be minimized during training, as well as on approximation terms. It is similar in fashion to the classic domain adaptation bounds proposed by~\cite{DBLP:journals/ml/Ben-DavidBCKPV10,DBLP:conf/colt/MansourMR09}. 

\begin{restatable}{theorem}{disent}\label{thm:disent}  Assume that the loss function $\ell$ is symmetric and obeys the triangle inequality. Then, for any autoencoder $h = g\circ f \in \mathcal{H}$, such that $f(\vx_1,\vx_2) = (e_1(\vx_1),e_2(\vx_2)) \in \mathcal{F}$ is an encoder and $g\in \mathcal{M}$ is a decoder, the following holds,
\begin{equation}\label{eq:disent}
\begin{aligned}
R_{D_{A,B}}[h,y] \leq & R_{D_{B,\hat{B}}}[h,y] + \min_{g^* \in \mathcal{M}} \left\{ R_{D_{A,B}}[g^*\circ f,y] + R_{D_{B,\hat{B}}}[g^*\circ f,y] \right\}\\
&+ \disc_{\mathcal{M}}(f \circ D_{A,B}, f \circ D_{B,\hat{B}})
\end{aligned}
\end{equation}
where $D_{B,\hat{B}}$ is the distribution of $(\vb,\vb)$ where $\vb \sim D_B$.
\end{restatable}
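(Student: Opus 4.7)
The bound has the flavor of a Ben-David-style domain adaptation inequality, with $D_{A,B}$ playing the role of the ``target'' distribution and $D_{B,\hat{B}}$ the ``source'' distribution (on which $h$ can be trained directly via the reconstruction loss $\gL_B$, since $y(\vb,\vb)=\vb$). The plan is to chain two applications of the triangle inequality for $\ell$ around an auxiliary decoder $g^*\in\mathcal{M}$, and to pay the cost of swapping distributions with the $\mathcal{M}$-discrepancy.

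\textbf{Step 1 (insert $g^*\circ f$ via the triangle inequality).} Fix any $g^*\in\mathcal{M}$. Using symmetry and the triangle inequality of $\ell$ pointwise and then taking expectations under $D_{A,B}$, I would write
\begin{equation*}
R_{D_{A,B}}[g\circ f,\, y] \;\le\; R_{D_{A,B}}[g\circ f,\, g^*\circ f] \;+\; R_{D_{A,B}}[g^*\circ f,\, y].
\end{equation*}
The second summand already appears on the right-hand side of the theorem; the first summand is where the discrepancy enters.

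\textbf{Step 2 (swap distributions via the discrepancy).} Both $g$ and $g^*$ lie in $\mathcal{M}$, so the function $\vz\mapsto \ell(g(\vz),g^*(\vz))$ is exactly of the form appearing inside $\disc_{\mathcal{M}}$. Pushing forward through $f$, the map $(\vx_1,\vx_2)\mapsto \ell(g(f(\vx_1,\vx_2)), g^*(f(\vx_1,\vx_2)))$ has expectation $\E_{\vz\sim f\circ D}[\ell(g(\vz),g^*(\vz))]$ under either of the relevant distributions $D=D_{A,B}$ or $D=D_{B,\hat{B}}$. By the definition of $\disc_{\mathcal{M}}$,
\begin{equation*}
R_{D_{A,B}}[g\circ f,\, g^*\circ f] \;\le\; R_{D_{B,\hat{B}}}[g\circ f,\, g^*\circ f] \;+\; \disc_{\mathcal{M}}\!\bigl(f\circ D_{A,B},\, f\circ D_{B,\hat{B}}\bigr).
\end{equation*}

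\textbf{Step 3 (second triangle inequality, then minimize).} On the new $D_{B,\hat{B}}$ term I apply the triangle inequality once more, this time around the target $y$:
\begin{equation*}
R_{D_{B,\hat{B}}}[g\circ f,\, g^*\circ f] \;\le\; R_{D_{B,\hat{B}}}[g\circ f,\, y] \;+\; R_{D_{B,\hat{B}}}[y,\, g^*\circ f] \;=\; R_{D_{B,\hat{B}}}[h,y] \;+\; R_{D_{B,\hat{B}}}[g^*\circ f,\, y].
\end{equation*}
Combining Steps 1--3 and taking the infimum over $g^*\in\mathcal{M}$ yields exactly the inequality in Eq.~\ref{eq:disent}.

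The argument is essentially bookkeeping once the right auxiliary function is chosen; the only subtle point I would double-check is Step~2, namely that the class of ``discriminators'' $\ell(c_1,c_2)$ with $c_1,c_2\in\mathcal{M}$ is rich enough to cover the particular pair $(g,g^*)$ that arises, and that the push-forward to $f\circ D$ is valid so the sup defining $\disc_{\mathcal{M}}$ genuinely upper-bounds the difference $R_{D_{A,B}}[g\circ f,g^*\circ f] - R_{D_{B,\hat{B}}}[g\circ f,g^*\circ f]$. Both are immediate from the definitions given, but they are the only place the structure of $\mathcal{H}=\mathcal{M}\circ\mathcal{F}$ is actually used; everything else is the triangle inequality and linearity of expectation.
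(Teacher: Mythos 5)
Your proposal is correct and follows essentially the same argument as the paper's proof: a first triangle inequality around $g^*\circ f$, a distribution swap paid for by $\disc_{\mathcal{M}}(f\circ D_{A,B}, f\circ D_{B,\hat{B}})$ (valid since $g,g^*\in\mathcal{M}$ and $R_{f\circ D}[g,g^*]=R_{D}[g\circ f,g^*\circ f]$), and a second triangle inequality around $y$. The only cosmetic difference is that the paper picks $g^*$ as the minimizer up front while you fix an arbitrary $g^*$ and take the infimum at the end, which is equivalent.
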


(The proofs can be found in the appendix.) Thm.~\ref{thm:disent} provides an upper bound on the generalization risk $R_{D_{A,B}}[h,y]$, which is the argument that we would like to minimize. The upper bound is decomposed of three terms: a reconstruction error, an approximation error and a discrepancy term. The first term, 
\begin{equation}\label{eq:12}
\begin{aligned}
R_{D_{B,\hat{B}}}[h, y] = \mathbb{E}_{(\vb,\vb) \sim D_{B,\hat{B}}} [\ell(g \circ f(\vb,\vb), y(\vb,\vb))] = \mathbb{E}_{\vb \sim D_{B}} [\ell(g \circ f(\vb,\vb), \vb)] \\
\end{aligned}
\end{equation}
is the reconstruction error for samples $\vb \sim D_B$. Since we do not have full access to $D_B$, we minimize its empirical version (see Eq.~\ref{eq:loss2}). The second term,   
$\min_{g \in \mathcal{M}} \left\{ R_{D_{A,B}}[g\circ f,y] + R_{D_{B,\hat{B}}}[g\circ f,y] \right\}$, 
measures the minimal error obtained by a best fitting $g \in \mathcal{M}$, such that $g \circ f \approx y$ for inputs $(\va,\vb) \sim D_{A,B}$ and for inputs $(\vb,\vb) \sim D_{B,\hat{B}}$. Similar to~\citep{DBLP:journals/ml/Ben-DavidBCKPV10,DBLP:conf/colt/MansourMR09}, this term is assumed to be small and is decreased as $\mathcal{M}$'s capacity is increased. The third term, $\disc_{\mathcal{M}}(f\circ D_{A,B}, f \circ D_{B,\hat{B}})$, is the discrepancy between the distributions $f \circ D_{A,B}$ and $f \circ D_{B,\hat{B}}$. This term is small, if the distributions of $(e_1(\va),e_2(\vb))$ (for $\va \sim D_A$ and $\vb \sim D_B$ independently) and $(e_1(\vb),e_2(\vb))$ (for $\vb \sim D_B$) are close to each other. Since $e_1(\va)$ and $e_2(\vb)$ are independent of each other (from the factorization $D_{A,B}=D_A \times D_B$), if this term is small, then, $e_1(\vb)$ and $e_2(\vb)$ weakly depend on each other. Moreover, if the discrepancy term is zero, then, $e_1(\vb)$ and $e_2(\vb)$ are independent of each other. 

While one can minimize the discrepancy term explicitly, by minimizing it with respect to $e_1$ and $e_2$, using a discriminator, we found empirically that this confusion term, which involves both parts of the embedding, is highly unstable. Instead, we show theoretically and empirically that there is a high likelihood for a disentangled representation (where $e_1(b)$ and $e_2(b)$ are independent) to emerge, and the discrepancy term can be replaced with the following  discrepancy $\disc_{\mathcal{M}'}(e_1 \circ D_{A}, e_1 \circ D_{B})$, which measures the closeness between the distributions of $e_1(\va)$ and of $e_1(\vb)$ for $\va \sim D_A$ and $\vb \sim D_B$, as is done in Eq.~\ref{eq:loss3}. Here, $\mathcal{M}'$ is a set of discriminators that are similar in complexity to the ones in $\mathcal{M}$. This discrepancy is simpler than the one in Eq.~\ref{eq:disent}, since it does not involve a comparison of $e_2$ between two distributions, nor the interaction between $e_1$ and $e_2$.

In Lem.~\ref{lem:reduction}, we show that if $e_1(\vb)$ and $e_2(\vb)$ are independent, then, $\disc(f \circ D_{A,B}, f \circ D_{B,\hat{B}}) \leq \disc(e_1 \circ D_{A}, e_1 \circ D_{B})$. Therefore, if a disentangled representation occurs, we can minimize $\disc(f \circ D_{A,B}, f \circ D_{B,\hat{B}})$, by minimizing $\disc(e_1 \circ D_{A}, e_1 \circ D_{B})$ instead. 

\begin{restatable}{lemma}{reduction}\label{lem:reduction}  Let $\mathcal{M}$ be the set of neural networks of the form: $c(\vx) = \phi(\mW_r \dots \phi(\mW_2\phi(\mW_1 \vx + \vq)))$, where, $\mW_i \in \mathbb{R}^{d_{i}\times d_{i+1}}$ for  $i \in \{1,\dots, r-1\}$, $\vq \in \mathbb{R}^{d_2}$ and $d_1 = E_1+E_2$. In addition, $\phi(x_1,\dots,x_k) = (\phi_1(x_1),\dots, \phi_1(x_k))$, for $k \in \sN$, $(x_1,\dots,x_k) \in \sR^k$ and a non-linear activation function $\phi_1 : \sR \to \sR$. Let $\mathcal{M}'$ be the same as $\mathcal{M}$ with $d_1 = E_1$ (instead of $d_1 = E_1+E_2$). Let $f(\vx) = (e_1(\vx),e_2(\vx))$ be an encoder and assume that: $e_1(\vb) \indep e_2(\vb)$. Then,
\begin{equation}
\begin{aligned}
\disc_{\mathcal{M}}(f \circ D_{A,B}, f \circ D_{B,\hat{B}}) \leq \disc_{\mathcal{M}'}(e_1 \circ D_{A}, e_1 \circ D_{B})
\end{aligned}
\end{equation}
\end{restatable}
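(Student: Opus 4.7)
The plan is to peel off the $e_2$ coordinate from the discriminators by using the assumed independence and the structure of the first-layer affine map, then appeal to the definition of discrepancy on $\mathcal{M}'$.

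First I would unpack both discrepancies. For arbitrary $c_1, c_2 \in \mathcal{M}$, the risks
\[
R_{f\circ D_{A,B}}[c_1,c_2]=\mathbb{E}_{(\va,\vb)\sim D_{A,B}}\bigl[\ell(c_1(e_1(\va),e_2(\vb)),\,c_2(e_1(\va),e_2(\vb)))\bigr]
\]
and $R_{f\circ D_{B,\hat B}}[c_1,c_2]=\mathbb{E}_{\vb\sim D_B}[\ell(c_1(e_1(\vb),e_2(\vb)),c_2(e_1(\vb),e_2(\vb)))]$ can both be rewritten as expectations over the product distribution $(e_1\circ D_{\bullet})\otimes(e_2\circ D_B)$, where in the first case $\bullet=A$ and in the second case $\bullet=B$. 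For the first case this uses the independence $\va\indep\vb$ built into $D_{A,B}$; for the second it uses the hypothesis $e_1(\vb)\indep e_2(\vb)$. Thus both distributions $f\circ D_{A,B}$ and $f\circ D_{B,\hat B}$ share the same marginal $e_2\circ D_B$ in the second coordinate and differ only in the first marginal.

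Next I would define, for each fixed $\vz\in\mathbb{R}^{E_2}$, the slice functions $\tilde c_i^{\vz}(\vu):=c_i(\vu,\vz)$. The crucial observation is that plugging $(\vu,\vz)$ into the first layer gives
\[
\mW_1\begin{pmatrix}\vu\\\vz\end{pmatrix}+\vq \;=\; \mW_1^{(1)}\vu + \bigl(\vq+\mW_1^{(2)}\vz\bigr),
\]
where $\mW_1^{(1)}\in\mathbb{R}^{d_2\times E_1}$ and $\mW_1^{(2)}\in\mathbb{R}^{d_2\times E_2}$ are the column blocks of $\mW_1$. Absorbing $\mW_1^{(2)}\vz$ into the bias shows that $\tilde c_i^{\vz}$ is realized by the same architecture with a modified bias and input dimension $E_1$, hence $\tilde c_i^{\vz}\in\mathcal{M}'$ for every $\vz$. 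This is the one step that needs the specific feed-forward form assumed for $\mathcal{M}$ and $\mathcal{M}'$, and will be the main (though modest) technical obstacle; it is really just bookkeeping about affine layers, but it is the only place where the hypothesis on the network class is used.

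Finally I would combine these pieces. Writing
\[
\Delta(c_1,c_2)\;:=\;R_{f\circ D_{A,B}}[c_1,c_2]-R_{f\circ D_{B,\hat B}}[c_1,c_2]\;=\;\mathbb{E}_{\vz\sim e_2\circ D_B}\bigl[\,R_{e_1\circ D_A}[\tilde c_1^{\vz},\tilde c_2^{\vz}]-R_{e_1\circ D_B}[\tilde c_1^{\vz},\tilde c_2^{\vz}]\,\bigr],
\]
Jensen's inequality for $|\cdot|$ pushes the absolute value inside the expectation, and for each $\vz$ the integrand is bounded by $\disc_{\mathcal{M}'}(e_1\circ D_A,e_1\circ D_B)$ since $\tilde c_1^{\vz},\tilde c_2^{\vz}\in\mathcal{M}'$. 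Therefore $|\Delta(c_1,c_2)|\le\disc_{\mathcal{M}'}(e_1\circ D_A,e_1\circ D_B)$, and taking the supremum over $c_1,c_2\in\mathcal{M}$ yields the claim.
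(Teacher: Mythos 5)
Your proposal is correct and follows essentially the same route as the paper's proof: factor both pushforward distributions as products sharing the marginal $e_2\circ D_B$, push the absolute value inside the expectation over the $e_2$-coordinate, and absorb the fixed second argument into the first-layer bias to realize each slice $c(\cdot,\vz)$ as a member of $\mathcal{M}'$. Your block-matrix justification of that last step is in fact more explicit than the paper's, which simply asserts the existence of $u\in\mathcal{M}'$ with $c(\vx,\vy)=u(\vx)$.
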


\subsection{Emergence of Disentangled Representations}

The following results are very technical and inherit many of the assumptions used by previous work. We therefore state the results informally here and leave the complete exposition to the appendix.

First, we extend Proposition~5.2 of~\cite{DBLP:journals/jmlr/AchilleS18} from the case of multiclass classification to the case of autoencoders. In their work, the aim is to show the conditions in which a mid-level representation $f$ of a multi-class classification neural network $h = c \circ f$ is both disentangled, i.e., $TC(f(\vb))$ is small, and is minimal, i.e., $I(f(\vb);\vb)$ is small, where $\vb$ is an input random variable. In their Proposition~5.2, they focus on a linear representation, i.e., $f = \mW$ is a linear transformation, and they introduce a tight upper bound on the sum $TC(\mW \vb)+I(\mW\vb;\vb)$. 

In the general case, their goal is to show that for a neural network $h = c \circ f$, both quantities $TC(f(\vb))$ and $I(f(\vb);\vb)$ are small, when $f$ that is a high level representation of the input. Unfortunately, they were unable to show that both terms are small simultaneously. Therefore, in their Cor.~5.3, they extend the bound of their Proposition~5.2 to show that only the mutual information $I(f(\vb);\vb)$ is small and assume that the components of each mid-level representation of $\vb$ in the layers of $f$ are uncorrelated, which is a very restrictive assumption. 

In our Lem.~\ref{lem:disentinf}, we provide an upper bound for $TC(f(\vb))$ that is similar in fashion to their bounds. The main differentiating factor is that we  deal with an autoencoder $h = g \circ f$. In this case, the mutual information $I(h(\vb);\vb)$ is expected to be large, since $h(\vb)$ is trained to recover $\vb$ and by the data-processing inequality, $I(f(\vb);\vb) \geq I(h(\vb);\vb)$ and therefore, $I(f(\vb);\vb)$ is also large in this case.  As a result, no upper bound is given on the mutual information term $I(f(\vb);\vb)$. This is unlike the information bottleneck principle, which guides the classification setting of~\cite{DBLP:journals/jmlr/AchilleS18}. 

Our upper bound on $TC(f(\vb))$ includes the term $-I(h(\vb);\vb)$, and consequently, $TC(f(\vb))$ tends to be smaller (increased disentanglement) as $I(h(\vb);\vb)$ increases. Additionally, the upper bound sums a term $d_1 \cdot q(\alpha)$. Here, $d_1$ is the dimension of $f(\vb)$, $\alpha$ denotes the amount of regularization in the weights of $f$ and $q(\alpha)$ is monotonically increasing as $\alpha$ tends to zero. Therefore, the disentanglement tends to be larger for an autoencoder $h = g\circ f$, such that $f$ is regularized and the mutual information $I(h(\vb);\vb)$ is large. In our analysis, we do not require $f$ to be a linear transformation and we do not assume that the components of each mid-level representation of $\vb$ in the layers of $f$ are uncorrelated.

\begin{lemma}[Informal]\label{lem:disentinf} Let $\vb \sim D_B$ be a distribution and $h = g \circ f$ an autoencoder. Let $d_1$ be the dimension of $f(\vb)$ and $d_2$ the dimension of the layer previous to $f(\vb)$. Under some assumptions on the weights of the encoder, there is a monotonically decreasing function $q(\alpha)$ for $\alpha>0$ such that:
\begin{equation}\label{eq:informalTC}
TC(f(\vb)) \leq d_1 \cdot q(\alpha)  - I(h(\vb);\vb) + \mathcal{O}\left(d_1/d_2 \right)
\end{equation}
\end{lemma}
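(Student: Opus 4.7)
The plan is to adapt the argument behind Proposition~5.2 of Achille \& Soatto to the final layer of the encoder $f$, and then couple the resulting bound with the data-processing inequality in order to replace the minimality term $I(f(\vb);\vb)$ (which must be large for an autoencoder) by the reconstruction information $I(h(\vb);\vb)$. Writing the last layer of $f$ as $f(\vb) = \phi(\mW \vz + \vq)$, where $\vz$ denotes the representation at the preceding layer (of dimension $d_2$), the key identity is
\begin{equation*}
TC(f(\vb)) \;=\; \sum_{i=1}^{d_1} H(f_i(\vb)) \;-\; H(f(\vb)),
\end{equation*}
and I would bound the two terms on the right separately.

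\textbf{Lower bound on $H(f(\vb))$ via data processing.} Because $h = g \circ f$ factors through $f(\vb)$, the data-processing inequality gives $I(f(\vb);\vb) \geq I(h(\vb);\vb)$, and therefore $H(f(\vb)) = I(f(\vb);\vb) + H(f(\vb)\mid \vb) \geq I(h(\vb);\vb) + H(f(\vb)\mid \vb)$. The conditional entropy $H(f(\vb)\mid \vb)$ is controlled by whatever noise-injection or weight-regularization model is assumed on $f$ (matching the one in Achille--Soatto); this contributes a term of the form $-d_1 \cdot q_1(\alpha)$ with $q_1$ decreasing in $\alpha$. This is the step that brings the reconstruction information $I(h(\vb);\vb)$ into the inequality, and it is the central departure from the classification-oriented bottleneck viewpoint where $I(f(\vb);\vb)$ is meant to be squeezed toward zero.

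\textbf{Upper bound on $\sum_i H(f_i(\vb))$ and the $d_1/d_2$ term.} Each marginal $f_i(\vb) = \phi_1(\mW_i \vz + q_i)$ is a scalar random variable whose differential entropy I would bound by that of a Gaussian with matching variance; this variance is controlled by $\|\mW_i\|^2 \cdot \Var(\vz)$ plus the weight-noise variance, and summing over $i$ under the regularization assumption yields an $d_1 \cdot q_2(\alpha)$ term, again decreasing in $\alpha$. Setting $q := q_1 + q_2$ gives the claimed $d_1 \cdot q(\alpha)$. The remaining gap is that Achille--Soatto's clean bound implicitly assumes the components of $\vz$ are uncorrelated, which we do not want to assume here; I would quantify the discrepancy between $\sum_i H(f_i(\vb))$ computed under the true law of $\vz$ and under its independent-coordinate surrogate via a perturbation argument on $\Cov(\vz)$, together with a concentration estimate (Hanson--Wright style) on the random rows $\mW_i$, so that the error scales as $\mathcal{O}(d_1/d_2)$.

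\textbf{Main obstacle.} The hardest step will be the last one: the nonlinearity $\phi$ prevents a direct linear decoupling of the marginal entropies from the off-diagonal entries of $\Cov(\vz)$, so one must argue that under the stated regularity on $\phi_1$ (Lipschitzness or bounded derivative) and on the weight distribution, the marginal entropies depend on those off-diagonals only through an $\mathcal{O}(1/d_2)$ sensitivity per coordinate, giving $\mathcal{O}(d_1/d_2)$ after summation. Once this perturbation lemma is in place, combining it with the two entropy bounds above reassembles precisely $TC(f(\vb)) \leq d_1 \cdot q(\alpha) - I(h(\vb);\vb) + \mathcal{O}(d_1/d_2)$, as claimed.
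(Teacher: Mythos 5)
Your skeleton is, at heart, the same as the paper's: the paper proves this (as Lem.~\ref{lem:emerge}) by invoking Proposition~5.2 of \cite{DBLP:journals/jmlr/AchilleS18} as a black box, which bounds $\bigl(TC(\vy^k)+I(\vy^k;\vz^{k-1})\bigr)/\dim(\vy^k)$ by $q(\alpha_k)+\mathcal{O}(1/\dim(\vz^{k-1}))$, then notes that $TC$ is invariant under the componentwise invertible activation (their Lem.~\ref{lem:TC}, so the bound on the pre-activation $\vy^k$ transfers to $f(\vb)=\phi(\vy^k)$), and finally applies the data-processing inequality $I(\vy^k;\vz^{k-1})\geq I(h(\vb);\vb)$ to bring in the reconstruction information. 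Your decomposition $TC=\sum_i H(f_i)-H(f)$ together with $H(f(\vb))\geq I(h(\vb);\vb)+H(f(\vb)\mid\vb)$ is exactly what sits inside that proposition, with the DPI applied at the input $\vb$ rather than at the penultimate layer; that variant is fine (you need $H(f(\vb)\mid\vb)\geq H(f(\vb)\mid\vz^{t-1})$, which holds since conditioning reduces entropy and the last-layer noise is independent of $\vb$). One small technical caution: the Gaussian max-entropy bound on the marginals and the variance computation are cleanest on the pre-activations $\mW_i\vz+q_i$; to transfer to $f_i(\vb)=\phi_1(\cdot)$ you should route through the $TC$-invariance lemma rather than bounding $\Var(\phi_1(\cdot))$ directly.

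The genuine gap is the step you yourself flag as the main obstacle. The paper does \emph{not} remove the uncorrelatedness assumption on the penultimate layer: Lem.~\ref{lem:emerge} explicitly hypothesizes that the marginals of $\mathbb{P}[\vy^k]$ and $\mathbb{P}[\vy^k|\vz^{k-1}]$ are Gaussian, that the components of $\vz^{k-1}$ are uncorrelated, and that their kurtosis is uniformly bounded; the $\mathcal{O}(d_1/d_2)$ term then comes from the concentration argument already inside Achille--Soatto's proposition under those hypotheses, not from correcting for correlations. Your proposed perturbation argument (Hanson--Wright on the rows of $\mW$ plus an ``$\mathcal{O}(1/d_2)$ sensitivity per coordinate'' in the off-diagonal entries of $\Cov(\vz)$) is not established and is the hardest claim in your write-up: the marginal entropy of $\mW_i\vz$ depends on the full joint law of $\vz$, there are $\Theta(d_2^2)$ off-diagonal covariance entries, and nothing in the stated regularity guarantees that their aggregate effect on $\sum_i H(f_i(\vb))$ is only $\mathcal{O}(d_1/d_2)$. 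Since the lemma is stated informally ``under some assumptions on the weights of the encoder,'' the intended (and far easier) route is to take uncorrelatedness of $\vz^{k-1}$ as one of those assumptions, as the paper does, rather than to prove it away.
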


Eq.~\ref{eq:informalTC} bounds the total correlation of $f(\vb)$, which measures the amount of dependence between the components of the encoder on samples in $B$. The bounds has three terms: $d_1 \cdot q(\alpha)$, $-I(h(\vb),\vb)$ and $\mathcal{O}\left(d_1/d_2 \right)$. In this formulation, $\alpha$ denotes the amount of regularization in the weights of $f$. In addition, $q(\alpha)$ is monotonically increasing as $\alpha$ tends to zero. The term $I(h(\vb);\vb)$ measures the mutual information between the input $\vb$ and output $h(\vb)$ of the autoencoder $h$. Since the mutual information is subtracted in the right hand side, the larger it is, the smaller $TC(f(\vb))$ should be. The last term, $\mathcal{O}(d_1/d_2)$ measures the ratio between the dimension of the output of $f$ and the dimension of the previous layer of $f$. Thus, this quantity is small whenever there is a significance reduction in the dimension in the application of the last layer of $f$.  

Therefore, there is a tradeoff between the amount of regularization in the weights of $f$ and the mutual information $I(h(\vb);\vb)$. If there is small regularization, then, the autoencoder is able to produce better reconstruction $h(\vb) \approx \vb$, and therefore, a larger value of $I(h(\vb);\vb)$. On the other hand, small regularization leads to a higher value of $q(\alpha)$.

The bound relies on the mutual information between the inputs and outputs of the autoencoder to be large.  The following lemma provides an argument why this is the case when the expected reconstruction error of the autoencoder is small.

\begin{lemma}[Informal] Let $\vb \sim D_B$ be a distribution over a discrete set $\sX_B$ and $h = g \circ f$ an autoencoder. Assume that $\forall \vx_1\neq \vx_2 \in \sX_B: \|\vx_1-\vx_2\|_1 > \Delta$. Then, 
\begin{equation}
I(h(\vb);\vb) \geq \left(1-\frac{\mathbb{E}[\|h(\vb)-\vb\|_1] }{\Delta} \right) H(\vb) - \sqrt{\mathbb{E}[\|h(\vb)-\vb\|_1]}
\end{equation}
\end{lemma}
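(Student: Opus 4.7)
The plan is a Fano-style argument on a rounded version of the autoencoder's output. First, define a deterministic post-processor $q:\sR^M \to \sX_B$ sending each vector to its nearest neighbor in $\sX_B$ under the $L_1$ metric (breaking ties arbitrarily), and let $\hat{\vb} := q(h(\vb))$. The separation hypothesis is used as follows: whenever $\|h(\vb) - \vb\|_1 < \Delta/2$, any other $\vx' \in \sX_B$ satisfies $\|h(\vb) - \vx'\|_1 \geq \|\vx' - \vb\|_1 - \|h(\vb) - \vb\|_1 > \Delta/2$ by the triangle inequality, so $\hat{\vb} = \vb$. Markov's inequality then yields
\begin{equation*}
P_e := \mathbb{P}[\hat{\vb} \neq \vb] \leq \mathbb{P}[\|h(\vb) - \vb\|_1 \geq \Delta/2] \leq \frac{2\,\mathbb{E}[\|h(\vb) - \vb\|_1]}{\Delta},
\end{equation*}
which, up to an absolute constant, is the quantity appearing in the claim.

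Next, I would invoke the following refined Fano-type bound: $I(\vb; \hat{\vb}) \geq (1 - P_e) H(\vb) - h_2(P_e)$, where $h_2$ is the binary entropy. To see it, introduce the error indicator $E := \mathbf{1}[\hat{\vb} \neq \vb]$ and decompose
\begin{equation*}
H(\vb \mid \hat{\vb}, E) = (1-P_e)\, H(\vb \mid \hat{\vb}, E = 0) + P_e\, H(\vb \mid \hat{\vb}, E = 1) \leq P_e\, H(\vb),
\end{equation*}
using $H(\vb \mid \hat{\vb}, E = 0) = 0$ (since $E = 0$ forces $\vb = \hat{\vb}$) together with $H(\vb \mid \hat{\vb}, E = 1) \leq H(\vb)$. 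Therefore $I(\vb; \hat{\vb}, E) \geq (1 - P_e) H(\vb)$, while on the other hand $I(\vb; \hat{\vb}, E) \leq I(\vb; \hat{\vb}) + H(E) = I(\vb; \hat{\vb}) + h_2(P_e)$, giving the bound. The data-processing inequality then promotes this to $I(h(\vb); \vb) \geq I(\hat{\vb}; \vb) \geq (1 - P_e) H(\vb) - h_2(P_e)$, because $\hat{\vb}$ is a deterministic function of $h(\vb)$.

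To finish, I would dominate the binary entropy by the square-root term using $h_2(p) \leq 2\sqrt{p(1-p)} \leq 2\sqrt{p}$; combined with the Markov estimate on $P_e$, this gives $h_2(P_e) = O\bigl(\sqrt{\mathbb{E}[\|h(\vb) - \vb\|_1]/\Delta}\bigr)$, which matches the $\sqrt{\mathbb{E}[\|h(\vb) - \vb\|_1]}$ in the statement up to absolute constants and a factor of $\Delta$. Since the result is explicitly labeled informal, these normalizations can be absorbed. The proof contains no deep obstacle; the only real work is bookkeeping with constants, in particular the passage from the Markov bound's $2\mathbb{E}/\Delta$ to the $\mathbb{E}/\Delta$ coefficient in the claim, which can be tightened by a slightly cleverer choice of quantization radius or by using a more refined tail bound in place of plain Markov. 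The crucial conceptual input is simply that the $\Delta$-separation lets a small $L_1$ reconstruction error be converted into a small discrete classification error, which is then fed into Fano.
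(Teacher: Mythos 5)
Your proof is correct and follows essentially the same route as the paper's: nearest-neighbor quantization of $h(\vb)$, Markov's inequality to control the decoding error probability, a Fano-type inequality (the paper's Lem.~\ref{lem:fanoregev}, proved by the very same error-indicator decomposition you use), and the bound $H(p)\leq 2\sqrt{p(1-p)}$ on the binary entropy. The only divergence is your (sound) choice of decoding radius $\Delta/2$ where the paper uses $\Delta$, which costs you a factor of $2$ in the Markov step that the statement, being informal, absorbs.
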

The above lemma asserts that if the samples in $D_B$ are well separated, whenever the autoencoder has a small expected reconstruction error, $\mathbb{E}[\|h(\vb)-\vb\|_1]$, then, the mutual information $I(h(\vb);\vb)$ is at least a large portion of $H(\vb)$. Therefore, we conclude that if the autoencoder generalizes well, then, it also maximizes the mutual information $I(h(\vb);\vb)$.

{\color{black} To conclude the analysis: for a small enough reconstruction error, when training the autoencoder, the mutual information between the autoencoder's input and output is high (Lem.~\ref{lem:riskinfo}), which implies that the individual coordinates of the representation layer are almost independent of each other (Lem.~\ref{lem:emerge}). When using part of the representation to encode the information that exists in domain $A$ (the shared part), the other part would contain coordinates that are weakly dependent of the features encoded in $A$. In such a case, we can train with a GAN that involves only the shared representation (Lem.~\ref{lem:reduction}). That way, we can upper bound the generalization error expressed in Thm.~\ref{thm:disent}, using relatively simple loss terms, as is done in Sec.~\ref{sec:method}.
}

\section{Experiments}

We evaluate our method on three additive facial attributes: eyewear, facial hair, and smile. Images from the celebA face image dataset by~\cite{celeba} were used, since these are conveniently annotated as having the attribute or not. The images without the attribute (no glasses, or no facial-hair, or no smile) were used as domain $A$ in each of the experiments. Note that three different $A$ domains were used. As the second domain $B$, we used the images labeled as having glasses, having facial hair, or smiling, according to the experiment.

Our underlying network architecture adapts the architecture used by~\cite{lample2017fader}, which is based on~\citep{isola2017image}, where we use Instance Normalization~\citep{ulyanov2016instance} instead of Batch Normalization~\citep{ioffe2015batch}, and without dropout. Let $C_k$ denote a Convolution-InstanceNorm-ReLU layer with $k$ filters, where a kernel size of $4 \times 4$, with a stride of 2, and a padding of 1 is used. The activations of the encoders $e_1,e_2$ are leaky-ReLUs with a slope of 0.2 and the deocder $g$ employs ReLUs. $e_1$ has the following layers $C_{32},C_{64},C_{128},C_{256},C_{512},C_{512-d}$; $e_2$ has a slightly lower capacity $C_{32},C_{64},C_{128},  C_{128},C_{128},C_{d}$, where $d=25$. The input images have a size of $128 \times 128$, and the encoding is of size $512\times 2\times 2$ (split between the $e_1$ and $e_2$). $g$ is symmetric to the encoders and employs transposed convolutions for the upsampling.

In the first set of experiments, we add the relevant content from a random image $\vb \in B$ into an image from $\va$. The results are given in Fig.~\ref{fig:glasses}, and appendix Fig.~\ref{fig:smile} and~\ref{fig:beard}. We compare with two guided image translation baselines: MUNIT~\citep{munit} and DRIT~\citep{Lee_2018_ECCV}. We used the published code for each method and despite our best effort, these methods fail on the task of content addition. In almost all cases, the baseline methods apply the style of the guide and not the added content.

It should be noted that the simplicity of our approach directly translates to more efficient training than the two baselines methods. Our method has one weighting hyperparameter, which is fixed throughout the experiments. MUNIT and DRIT each has several weighting hyperparameters (since they use more loss terms) and these require attention and, need to change between the experiments, both in our runs and in the authors' own experiments. In addition, our method has a lower memory footprint and a much shorter duration of each iteration. The statistics are reported in Tab.~\ref{tab:runtime} and as can be seen, the runtime and memory footprint of our method are much closer to the Fader network by~\cite{lample2017fader}, which cannot perform guided mapping, than to MUNIT and DRIT.

\begin{table}
\centering
  \caption{Runtime and memory footprint statistics for our method, the two guided translation methods from the literature (MUNIT and DRIT), and the Fader network disentangled representation method.}
  \label{tab:runtime}
  \centering
  \begin{tabular}{lcccc}
    \toprule
    Measure & MUNIT & DRIT & Fader & Our\\
    \midrule
    Performs guided mapping? & Yes & Yes & No & Yes\\
    \midrule
Number of iterations &
1.0M&
0.5M&
1.5M&
1.0M\\
Time per iteration &
0.78s&
1.05s&
0.14s&
0.15s\\
Memory footprint&
4.35 GB&
6.03 GB&
1.83 GB&
1.23 GB\\
Number of weighting parameters &
3 (require tuning)&
6 (require tuning)&
1 (fixed)&
1 (fixed)\\
    \bottomrule
\end{tabular}
\end{table}

Since the performance of the baselines is clearly inferior in the current setting, we did not hold a user study comparing different algorithms. Instead, we compare the output of our method directly with real images. Two experiments are conducted: (i) can users tell the difference between an image from domain $B$ and an image from domain $A$ that was translated to domain $B$, and (ii) can users tell the difference between an image from domain $B$ and the same image, after replacing the attribute's content (glasses, smile, or facial-hair) with that of another image from $B$. The experiment was performed with $n=30$ users, who observed 10 pairs of images each, for each of the tests.

The results are reported in Tab.~\ref{tab:bach10}. As can be seen, users are able to detect the real image over the generated one, in most of the cases. However, the success ratio varies between the three image translation tasks and between the two types of comparisons. The most successful experiments, i.e., those where the users were confused the most, were in the facial hair (``beard'') category.  In contrast, when replacing a person's glasses with those of a random person, the users were able to tell the real image 74\% of the time.

Fig.~\ref{fig:glasses_matrix} and appendix Fig.~\ref{fig:smile_matrix} and~\ref{fig:beard_matrix} show the type of images shown in the experiment where users were asked to tell an image from domain $B$ from an hybrid image that contains a face of one image from this domain, and the attribute content from another image from it. As can be seen, most mix-and-match combinations seem natural. However, going over the rows, which should have a fixed attribute (e.g., the same glasses), one observes some variation. This unwanted variation arises from the need to fit the content to the new face.

The method does have, as can be expected, difficulty dealing with low quality inputs. Examples are shown in Fig.~\ref{fig:limitations}, including a misaligned source or guide image. Also shown is an example in which the added content in the target domain is very subtle. These challenges result in a lower quality output. However, the output in each case does indicate some ability to overcome the challenging input.

To evaluate the linearity of the latent representation $e_2(b)$, we performed interpolation experiments. The results are presented in Fig.~\ref{fig:interpolation}. As can be seen, the change is gradual as we interpolate linearly between the $e_2$ encoding of the two guide images shown on the left and on the right. 

In the supplementary appendix, we provide many more translation examples, see Fig.~\ref{fig:glasses_sheet}, ~\ref{fig:smile_sheet}, and~\ref{fig:beard_sheet}.

\begin{figure*}[t]
\centering
  \begin{tabular}{c@{~}c@{~}c|c@{~}c|c@{~}c}
  & \multicolumn{2}{c|}{Our Method}
  & \multicolumn{2}{c|}{MUNIT}
  & \multicolumn{2}{c}{DRIT}\\%
   \raisebox{.8cm}{\diagbox[width=0.15\linewidth, height=0.12\linewidth]{\raisebox{0pt}{\hspace*{0.00cm}Glasses}}{\raisebox{0cm}{\rotatebox{90}{Source}}}} &
\includegraphics[width=0.135\linewidth,height=0.135\linewidth, clip]{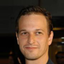}&
 \includegraphics[width=0.135\linewidth,height=0.135\linewidth,clip]{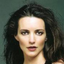}&
 \includegraphics[width=0.135\linewidth,height=0.135\linewidth,clip]{new_glasses/real_domb_05.png}&
 \includegraphics[width=0.135\linewidth,height=0.135\linewidth,clip]{new_glasses/real_domb_08.png}&
\includegraphics[width=0.135\linewidth, height=0.135\linewidth,clip]{new_glasses/real_domb_05.png}&
\includegraphics[width=0.135\linewidth, height=0.135\linewidth,clip]{new_glasses/real_domb_08.png}\\

\includegraphics[width=0.135\linewidth, ,height=0.135\linewidth,clip]{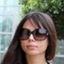}&
\includegraphics[width=0.135\linewidth, height=0.135\linewidth,clip]{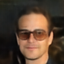}&
 \includegraphics[width=0.135\linewidth,height=0.135\linewidth,clip]{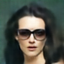}&
 \includegraphics[width=0.135\linewidth, ,height=0.135\linewidth,clip,trim={0 0.6cm 0 0.6cm}]{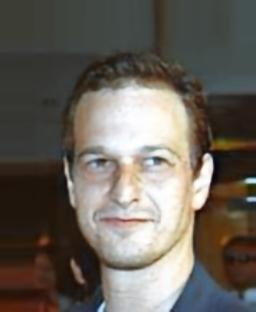}&
 \includegraphics[width=0.135\linewidth,height=0.135\linewidth,clip,trim={0 0.6cm 0 0.6cm}]{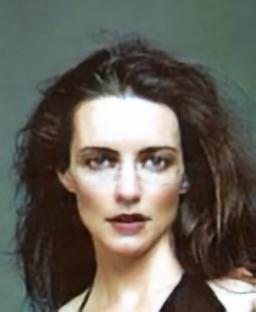}&
 \includegraphics[width=0.135\linewidth,height=0.135\linewidth,clip]{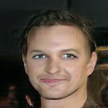}&
 \includegraphics[width=0.135\linewidth,height=0.135\linewidth,clip]
 {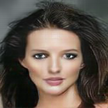}\\

\includegraphics[width=0.135\linewidth, height=0.135\linewidth,clip]{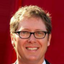}&
\includegraphics[width=0.135\linewidth, height=0.135\linewidth,clip]{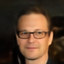}&
 \includegraphics[width=0.135\linewidth, height=0.135\linewidth,clip]
 {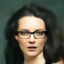}&
 \includegraphics[width=0.135\linewidth, height=0.135\linewidth,clip,trim={0 0.6cm 0 0.6cm}]{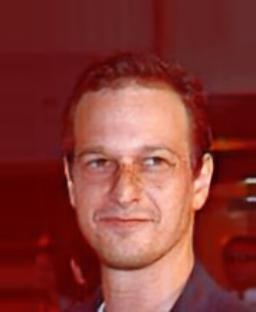}&
 \includegraphics[width=0.135\linewidth,height=0.135\linewidth, clip,trim={0 0.6cm 0 0.6cm}]
 {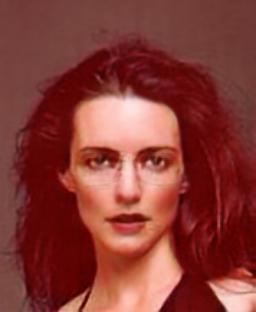}&
  \includegraphics[width=0.135\linewidth,height=0.135\linewidth, clip]{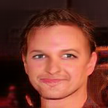}&
 \includegraphics[width=0.135\linewidth,height=0.135\linewidth, clip]
 {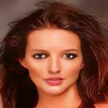}\\

\includegraphics[width=0.135\linewidth,height=0.135\linewidth, clip]{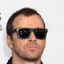}&
\includegraphics[width=0.135\linewidth,height=0.135\linewidth, clip]{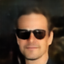}&
\includegraphics[width=0.135\linewidth,height=0.135\linewidth, clip]{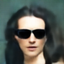}&
 \includegraphics[width=0.135\linewidth,height=0.135\linewidth, clip,trim={0 0.6cm 0 0.6cm}]{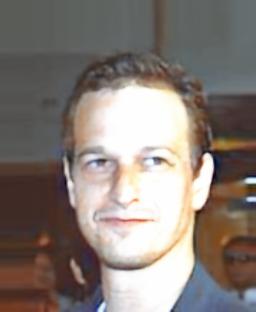}&
\includegraphics[width=0.135\linewidth,height=0.135\linewidth, clip,trim={0 0.6cm 0 0.6cm}]{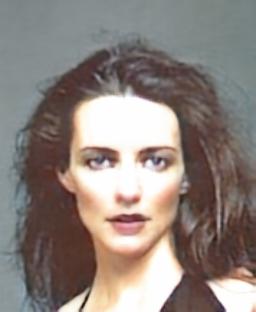}&
 \includegraphics[width=0.135\linewidth,height=0.135\linewidth, clip]{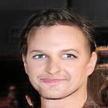}&
 \includegraphics[width=0.135\linewidth,height=0.135\linewidth, clip]
{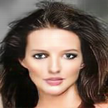}\\

 \includegraphics[width=0.135\linewidth,height=0.135\linewidth, clip]{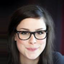}&
\includegraphics[width=0.135\linewidth,height=0.135\linewidth, clip]{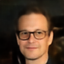}&
\includegraphics[width=0.135\linewidth,height=0.135\linewidth, clip]{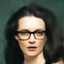}&
 \includegraphics[width=0.135\linewidth,height=0.135\linewidth, clip,trim={0 0.6cm 0 0.6cm}]{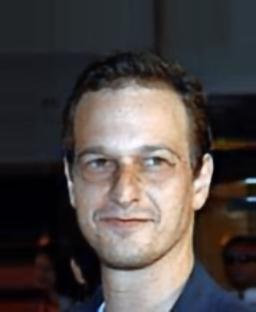}&
\includegraphics[width=0.135\linewidth,height=0.135\linewidth, clip,trim={0 0.6cm 0 0.6cm}]{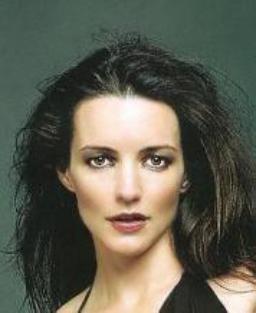}&
 \includegraphics[width=0.135\linewidth,height=0.135\linewidth, clip]{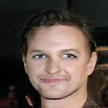}&
 \includegraphics[width=0.135\linewidth,height=0.135\linewidth, clip]
 {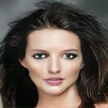}\\

\end{tabular}
\caption{Glasses transfer. Our method vs literature baselines. Each image combines the domain $A$ image in the top row, with the content of the guide image on the left column.\label{fig:glasses}}
\end{figure*}

\begin{table}
  \caption{User study results. In each cell is the ratio of images, were users selected a real image as more natural than a generated one. Closer to 50\% is better for the method.}
  \label{tab:bach10}
  \centering
  \begin{tabular}{lccc}
    \toprule
    Forced choice performed by the user & Glasses & Smile & Facial Hair\\
    \midrule
  	Selected $\vb$ over $g(e_1(\va),e_2(\vb'))$, for $\va\in A$, $\vb,\vb'\in B$ & 58.2\%	&63.4\% &51.7\%	 \\ 
    Selected $\vb$ over $g(e_1(\vb),e_2(\vb'))$, for $\vb,\vb'\in B$ & 	74.2\%&	65.8\% &	56.7\%\\

    \bottomrule
\end{tabular}
\end{table}
\begin{figure*}[t]
\centering  \begin{tabular}{c@{~}c@{~}c@{~}c@{~}c}
  \raisebox{.8cm}{\diagbox[width=0.15\linewidth, height=0.12\linewidth]{\raisebox{0pt}{\hspace*{0.00cm}Glasses}}{\raisebox{0cm}{\rotatebox{90}{Source}}}}
&
 \includegraphics[width=0.15\linewidth, clip]{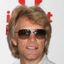}&
 \includegraphics[width=0.15\linewidth, clip]{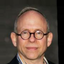}&
 \includegraphics[width=0.15\linewidth, clip]{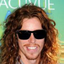}&
 \includegraphics[width=0.15\linewidth, clip]{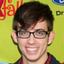}\\

 \includegraphics[width=0.15\linewidth, clip]{new_gls_matrix/real_domA_00.png}&
 \includegraphics[width=0.15\linewidth, clip]{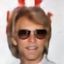}&
 \includegraphics[width=0.15\linewidth, clip]{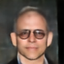}&
 \includegraphics[width=0.15\linewidth, clip]{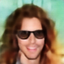}&
 \includegraphics[width=0.15\linewidth, clip]{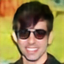}\\
 
 \includegraphics[width=0.15\linewidth, clip]{new_gls_matrix/real_domA_02.png}&
 \includegraphics[width=0.15\linewidth, clip]{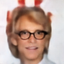}&
 \includegraphics[width=0.15\linewidth, clip]{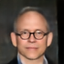}&
 \includegraphics[width=0.15\linewidth, clip]{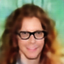}&
 \includegraphics[width=0.15\linewidth, clip]{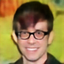}\\

 \includegraphics[width=0.15\linewidth, clip]{new_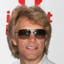}&
 \includegraphics[width=0.15\linewidth, clip]{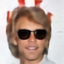}&
 \includegraphics[width=0.15\linewidth, clip]{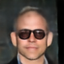}&
 \includegraphics[width=0.15\linewidth, clip]{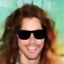}&
 \includegraphics[width=0.15\linewidth, clip]{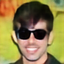}\\

 \includegraphics[width=0.15\linewidth, clip]{new_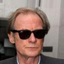}&
 \includegraphics[width=0.15\linewidth, clip]{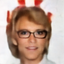}&
 \includegraphics[width=0.15\linewidth, clip]{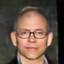}&
 \includegraphics[width=0.15\linewidth, clip]{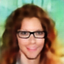}&
 \includegraphics[width=0.15\linewidth, clip]{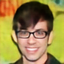}\\
\end{tabular}
\caption{A mix and match experiment for glasses, using only domain $B$ images. Each image is a combination of the source image in the top row and the guide image on the left column.}
  \label{fig:glasses_matrix}
\end{figure*}

\begin{figure*}[t]
\centering  
\begin{tabular}{@{}c@{~~~}c@{~~~}c@{}}

\begin{tabular}{c@{~}c}
  \raisebox{.8cm}{\diagbox[width=0.15\linewidth, height=0.12\linewidth]{\raisebox{0pt}{\hspace*{0.00cm}Guide}}{\raisebox{0cm}{\rotatebox{90}{Source}}}}
&
 \includegraphics[width=0.15\linewidth, clip]{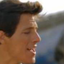}\\
 \includegraphics[width=0.15\linewidth, clip]{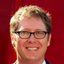}&
 \includegraphics[width=0.15\linewidth, clip]{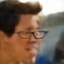}\\
\end{tabular} & 
\begin{tabular}{c@{~}c}
  \raisebox{.8cm}{\diagbox[width=0.15\linewidth, height=0.12\linewidth]{\raisebox{0pt}{\hspace*{0.00cm}Guide}}{\raisebox{0cm}{\rotatebox{90}{Source}}}}
&
 \includegraphics[width=0.15\linewidth, clip]{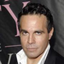}\\
 
 \includegraphics[width=0.15\linewidth, clip]{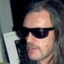}&
 \includegraphics[width=0.15\linewidth, clip]{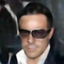}\\
 \end{tabular}
&
\begin{tabular}{c@{~}c}
  \raisebox{.8cm}{\diagbox[width=0.15\linewidth, height=0.12\linewidth]{\raisebox{0pt}{\hspace*{0.00cm}Guide}}{\raisebox{0cm}{\rotatebox{90}{Source}}}}
&
 \includegraphics[width=0.15\linewidth, clip]{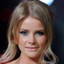}\\
 \includegraphics[width=0.15\linewidth, clip]{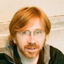}&
 \includegraphics[width=0.15\linewidth, clip]{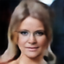}\\
\end{tabular}

\\
(a) & (b) & (c)\\ 

\end{tabular}
\caption{Some failure cases. (a) the source image is not well aligned. (b) the guide image is not well aligned. (c)  the guide image has a very subtle content.}
  \label{fig:limitations}
\end{figure*}

\begin{figure*}[t]
\centering
  \begin{tabular}{c@{~}c@{~}c@{~}c@{~}c@{~}c@{~}c@{~}c}
\includegraphics[width=0.1215135\linewidth, clip]{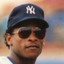}&
\includegraphics[width=0.1215135\linewidth, clip]{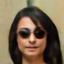}&
\includegraphics[width=0.1215135\linewidth, clip]{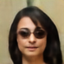}&
\includegraphics[width=0.1215135\linewidth, clip]{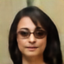}&
\includegraphics[width=0.1215135\linewidth, clip]{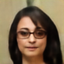}&
\includegraphics[width=0.1215135\linewidth, clip]{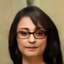}&
\includegraphics[width=0.1215135\linewidth, clip]{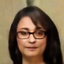}&
\includegraphics[width=0.1215135\linewidth, clip]{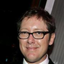}\\
 
\includegraphics[width=0.1215135\linewidth, clip]{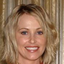}&
\includegraphics[width=0.1215135\linewidth, clip]{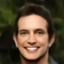}&
\includegraphics[width=0.1215135\linewidth, clip]{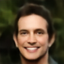}&
\includegraphics[width=0.1215135\linewidth, clip]{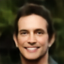}&
\includegraphics[width=0.1215135\linewidth, clip]{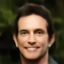}&
\includegraphics[width=0.1215135\linewidth, clip]{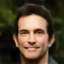}&
\includegraphics[width=0.1215135\linewidth, clip]{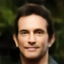}&
\includegraphics[width=0.1215135\linewidth, clip]{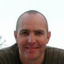}\\

\includegraphics[width=0.1215135\linewidth, clip]{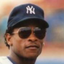}&
\includegraphics[width=0.1215135\linewidth, clip]{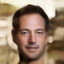}&
\includegraphics[width=0.1215135\linewidth, clip]{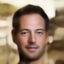}&
\includegraphics[width=0.1215135\linewidth, clip]{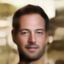}&
\includegraphics[width=0.1215135\linewidth, clip]{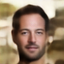}&
\includegraphics[width=0.1215135\linewidth, clip]{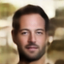}&
\includegraphics[width=0.1215135\linewidth, clip]{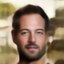}&
\includegraphics[width=0.1215135\linewidth, clip]{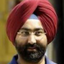}\\
\end{tabular}
\caption{Interpolation experiments, where the content representation is linearly mixed between the one extracted from the left image and the one extracted from the right image.}
  \label{fig:interpolation}
  
\end{figure*}

While the previous experiments focused on the guided addition of content (mapping from $A$ to $B$), our method can also be applied in the other direction, from $B$ to $A$. This way, the specific content in image $\vb \in B$ is removed. In our method, this is achieved simply by decoding a representation of the form $(e_1(\vb),0)$.

The advantage of mapping in this direction is the availability of additional literature methods to compare with, since no guiding is necessary. In Fig.~\ref{fig:fader}, we compare the results we obtain for removing a feature with the Fader network method of~\cite{lample2017fader}. As can be seen, the removal process of our method results in less residuals. 

To verify that we obtain a better quality in comparison with that of the published implementation of Fader networks, we have applied both an automatic classifier and a user study. 
The classifier is trained on the training set of domain $A$ and $B$, using the same architecture that is used by~\cite{lample2017fader} to perform model selection.

Tab.~\ref{tab:fader} presents the mean probability of class $B$ provided by the classifier for the output of both Fader network and our method. As can be seen, the probability to belong to the class of the image before the transformation is, as desired, low for both methods. It is somewhat lower on average in our method, despite the fact that our method does not use such a network during training.

The user study was conducted on $n=20$ users, each examining 20 random test set triplets from each experiment. Each triplet showed the original image (with the feature) and the results of the two algorithms, where the feature is removed. The users preferred our method over fader 92\% of the time for glasses removal, 89\% of the time for facial hair removal, and 91\% of the time for the removal of a smile.

\begin{table}[t]
\centering
  \caption{Classifier results for the image obtained after removing the desired feature. Results are the mean probability of domain $B$ for images that were transformed to domain $A$.}
  \label{tab:fader}
  \begin{tabular}{lccc}
    \toprule
    Probability of class $B$ & Glasses & Smile & Facial Hair\\
    \midrule
  	Fader networks~\citep{lample2017fader} & 0.066 & 0.064 & 0.182 \\ 
    Our & 	0.011&0.052&0.119\\
    \bottomrule
\end{tabular}
\end{table}

\begin{figure*}[t]
\begin{tabular}{ccc}
\begin{tabular}{c@{~}c@{~}c}
\includegraphics[width=0.091025\linewidth, clip]{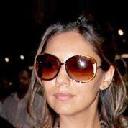}&
\includegraphics[width=0.091025\linewidth, clip]{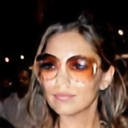}&
\includegraphics[width=0.091025\linewidth, clip]{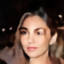}\\

\includegraphics[width=0.091025\linewidth, clip]{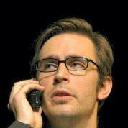}&
\includegraphics[width=0.091025\linewidth, clip]{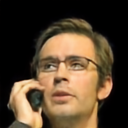}&
\includegraphics[width=0.091025\linewidth, clip]{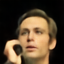}\\

\includegraphics[width=0.091025\linewidth, clip]{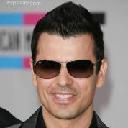}&
\includegraphics[width=0.091025\linewidth, clip]{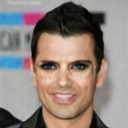}&
\includegraphics[width=0.091025\linewidth, clip]{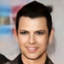}\\
Original&Fader &Our\\
\end{tabular}&
  \begin{tabular}{c@{~}c@{~}c}
\includegraphics[width=0.091025\linewidth, clip]{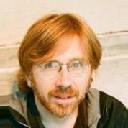}&
\includegraphics[width=0.091025\linewidth, clip]{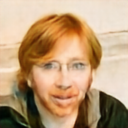}&
\includegraphics[width=0.091025\linewidth, clip]{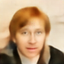}\\

\includegraphics[width=0.091025\linewidth, clip]{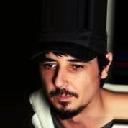}&
\includegraphics[width=0.091025\linewidth, clip]{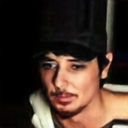}&
\includegraphics[width=0.091025\linewidth, clip]{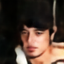}\\

\includegraphics[width=0.091025\linewidth, clip]{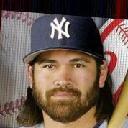}&
\includegraphics[width=0.091025\linewidth, clip]{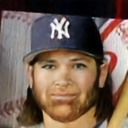}&
\includegraphics[width=0.091025\linewidth, clip]{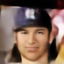}\\
Original&Fader &Our\\
\end{tabular}&
  \begin{tabular}{c@{~}c@{~}c}
\includegraphics[width=0.091025\linewidth, clip]{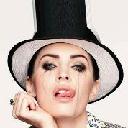}&
\includegraphics[width=0.091025\linewidth, clip]{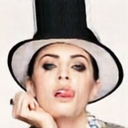}&
\includegraphics[width=0.091025\linewidth, clip]{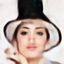}\\

\includegraphics[width=0.091025\linewidth, clip]{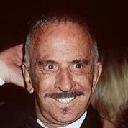}&
\includegraphics[width=0.091025\linewidth, clip]{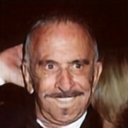}&
\includegraphics[width=0.091025\linewidth, clip]{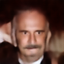}\\

\includegraphics[width=0.091025\linewidth, clip]{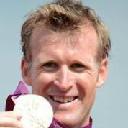}&
\includegraphics[width=0.091025\linewidth, clip]{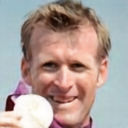}&
\includegraphics[width=0.091025\linewidth, clip]{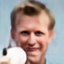}\\
Original&Fader &Our
\end{tabular}\\
(a) &(b) &(c)\\
\end{tabular}
\caption{A comparison to the Fader networks of~\cite{lample2017fader} for the task of removing a feature. (a) Glasses. (b) Facial hair. (c) Mouth opening.}
  \label{fig:fader}
\end{figure*}

\section{Conclusions}

When converting between two domains, there is an inherent ambiguity that arises from the domain-specific information in the target domain. In guided translation, the reference image in the target domain provides the missing information. Previous work has focused on the missing information that is highly tied to the texture of the image. For example, when translating between paintings and photos, DRIT adds considerable content from the reference photo. However, this is unstructured content, which is not well localized and is highly related to subsets of the image patches that exist in the target domain. In addition, the content from the reference photo that is out of the domain of paintings is not guaranteed to be fully present in the output.

Our work focuses on transformations in which the domain specific content is 
well structured, and guarantees to replicate all of the domain specific information from the reference image. This is done using a small number of networks and a surprisingly simple set of loss terms, which, due to the emergence of a disentangled representation, solves the problem convincingly.

\subsection*{Acknowledgements}

This project has received funding from the European Research Council (ERC) under the European
Union’s Horizon 2020 research and innovation programme (grant ERC CoG 725974). The theoretical analysis in this work is part of Tomer Galanti's Ph.D thesis research conducted at Tel Aviv University.

\FloatBarrier

\bibliography{gans}
\bibliographystyle{iclr2019_conference}

\clearpage
\appendix

\section{Additional figures}

\begin{figure*}[h]
\centering
  \begin{tabular}{c@{~}c@{~}c|c@{~}c|c@{~}c}
  & \multicolumn{2}{c|}{Our Method}
  & \multicolumn{2}{c|}{MUNIT}
  & \multicolumn{2}{c}{DRIT}\\%
   \raisebox{.8cm}{\diagbox[width=0.15\linewidth, height=0.12\linewidth]{\raisebox{0pt}{\hspace*{0.00cm}Mouth}}{\raisebox{0cm}{\rotatebox{90}{Source}}}} &
\includegraphics[width=0.135\linewidth,height=0.135\linewidth, clip]{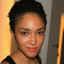}&
 \includegraphics[width=0.135\linewidth,height=0.135\linewidth,clip]
{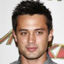}&
 \includegraphics[width=0.135\linewidth,height=0.135\linewidth,clip]{new_mouth/real_domb_05.png}&
 \includegraphics[width=0.135\linewidth,height=0.135\linewidth,clip]
 {new_mouth/real_domb_04.png}&
\includegraphics[width=0.135\linewidth, height=0.135\linewidth,clip]{new_mouth/real_domb_05.png}&
\includegraphics[width=0.135\linewidth, height=0.135\linewidth,clip]
{new_mouth/real_domb_04.png}\\

\includegraphics[width=0.135\linewidth, ,height=0.135\linewidth,clip]{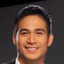}&
\includegraphics[width=0.135\linewidth, height=0.135\linewidth,clip]{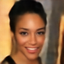}&
 \includegraphics[width=0.135\linewidth,height=0.135\linewidth,clip]
{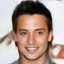}&
 \includegraphics[width=0.135\linewidth, ,height=0.135\linewidth,clip]{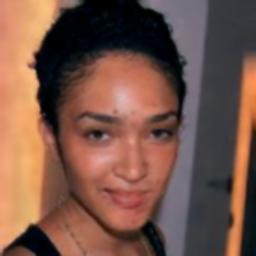}&
 \includegraphics[width=0.135\linewidth,height=0.135\linewidth,clip]{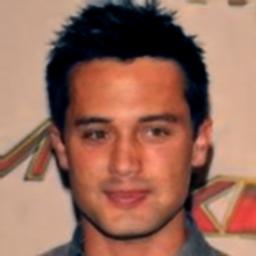}&
 \includegraphics[width=0.135\linewidth,height=0.135\linewidth,clip]{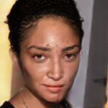}&
 \includegraphics[width=0.135\linewidth,height=0.135\linewidth,clip]
 {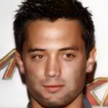}\\

\includegraphics[width=0.135\linewidth, height=0.135\linewidth,clip]{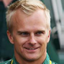}&
\includegraphics[width=0.135\linewidth, height=0.135\linewidth,clip]{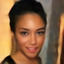}&
 \includegraphics[width=0.135\linewidth, height=0.135\linewidth,clip]
 {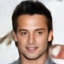}&
 \includegraphics[width=0.135\linewidth, height=0.135\linewidth,clip]{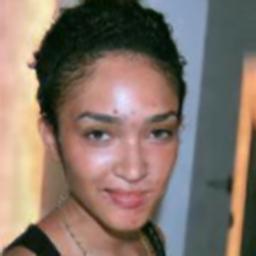}&
 \includegraphics[width=0.135\linewidth,height=0.135\linewidth, clip]
 {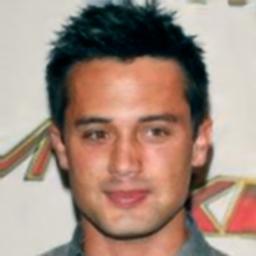}&
  \includegraphics[width=0.135\linewidth,height=0.135\linewidth, clip]{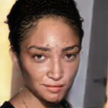}&
 \includegraphics[width=0.135\linewidth,height=0.135\linewidth, clip]
 {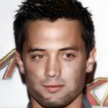}\\

\includegraphics[width=0.135\linewidth,height=0.135\linewidth, clip]{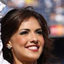}&
\includegraphics[width=0.135\linewidth,height=0.135\linewidth, clip]{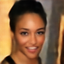}&
\includegraphics[width=0.135\linewidth,height=0.135\linewidth, clip]
{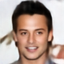}&
 \includegraphics[width=0.135\linewidth,height=0.135\linewidth, clip]{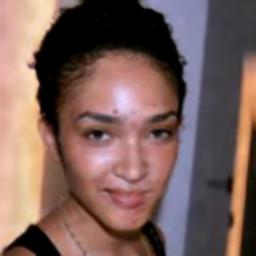}&
\includegraphics[width=0.135\linewidth,height=0.135\linewidth, clip]{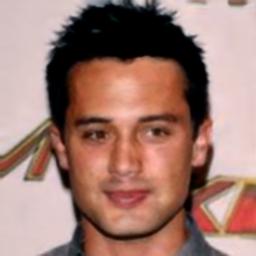}&
 \includegraphics[width=0.135\linewidth,height=0.135\linewidth, clip]{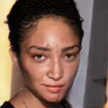}&
 \includegraphics[width=0.135\linewidth,height=0.135\linewidth, clip]
{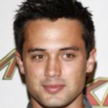}\\

 \includegraphics[width=0.135\linewidth,height=0.135\linewidth, clip]{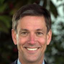}&
\includegraphics[width=0.135\linewidth,height=0.135\linewidth, clip]{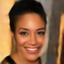}&
\includegraphics[width=0.135\linewidth,height=0.135\linewidth, clip]
{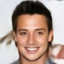}&
 \includegraphics[width=0.135\linewidth,height=0.135\linewidth, clip]{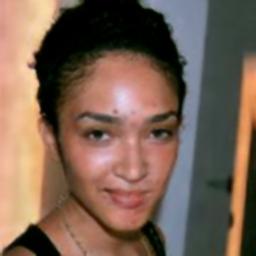}&
\includegraphics[width=0.135\linewidth,height=0.135\linewidth, clip]{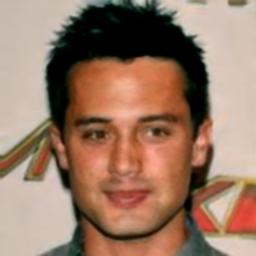}&
 \includegraphics[width=0.135\linewidth,height=0.135\linewidth, clip]{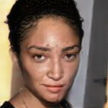}&
 \includegraphics[width=0.135\linewidth,height=0.135\linewidth, clip]
 {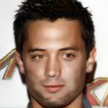}\\

\end{tabular}
\caption{Smile transfer. Our method vs literature baselines.  \label{fig:smile}}
\end{figure*}


\begin{figure*}[t]
\centering
  \begin{tabular}{c@{~}c@{~}c|c@{~}c|c@{~}c}
  & \multicolumn{2}{c|}{Our Method}
  & \multicolumn{2}{c|}{MUNIT}
  & \multicolumn{2}{c}{DRIT}\\%
   \raisebox{.8cm}{\diagbox[width=0.15\linewidth, height=0.12\linewidth]{\raisebox{0pt}{\hspace*{0.00cm}Beard}}{\raisebox{0cm}{\rotatebox{90}{Source}}}} &
\includegraphics[width=0.135\linewidth,height=0.135\linewidth, clip]{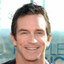}&
 \includegraphics[width=0.135\linewidth,height=0.135\linewidth,clip]
{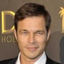}&
 \includegraphics[width=0.135\linewidth,height=0.135\linewidth,clip]{new_beard/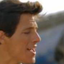}&
 \includegraphics[width=0.135\linewidth,height=0.135\linewidth,clip]
 {new_beard/real_domb_07.png}&
\includegraphics[width=0.135\linewidth, height=0.135\linewidth,clip]{new_beard/real_domb_02.png}&
\includegraphics[width=0.135\linewidth, height=0.135\linewidth,clip]
{new_beard/real_domb_07.png}\\

\includegraphics[width=0.135\linewidth, ,height=0.135\linewidth,clip]{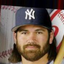}&
\includegraphics[width=0.135\linewidth, height=0.135\linewidth,clip]{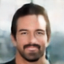}&
 \includegraphics[width=0.135\linewidth,height=0.135\linewidth,clip]
{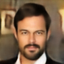}&
 \includegraphics[width=0.135\linewidth, ,height=0.135\linewidth,clip]{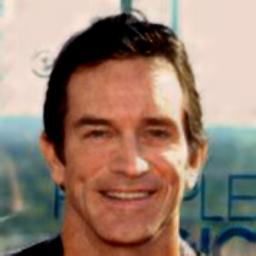}&
 \includegraphics[width=0.135\linewidth,height=0.135\linewidth,clip]{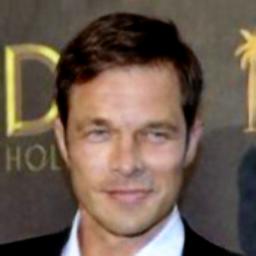}&
 \includegraphics[width=0.135\linewidth,height=0.135\linewidth,clip]{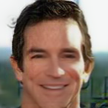}&
 \includegraphics[width=0.135\linewidth,height=0.135\linewidth,clip]
 {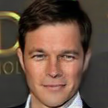}\\

\includegraphics[width=0.135\linewidth, height=0.135\linewidth,clip]{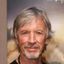}&
\includegraphics[width=0.135\linewidth, height=0.135\linewidth,clip]{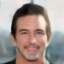}&
 \includegraphics[width=0.135\linewidth, height=0.135\linewidth,clip]
 {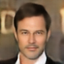}&
 \includegraphics[width=0.135\linewidth, height=0.135\linewidth,clip]{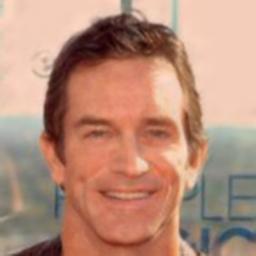}&
 \includegraphics[width=0.135\linewidth,height=0.135\linewidth, clip]
 {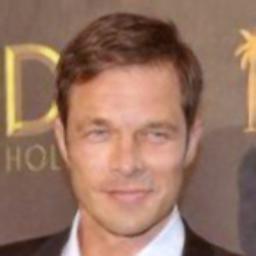}&
  \includegraphics[width=0.135\linewidth,height=0.135\linewidth, clip]{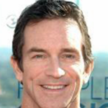}&
 \includegraphics[width=0.135\linewidth,height=0.135\linewidth, clip]
 {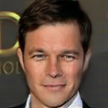}\\

\includegraphics[width=0.135\linewidth,height=0.135\linewidth, clip]{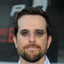}&
\includegraphics[width=0.135\linewidth,height=0.135\linewidth, clip]{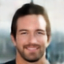}&
\includegraphics[width=0.135\linewidth,height=0.135\linewidth, clip]
{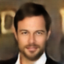}&
 \includegraphics[width=0.135\linewidth,height=0.135\linewidth, clip]{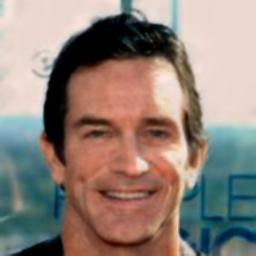}&
\includegraphics[width=0.135\linewidth,height=0.135\linewidth, clip]{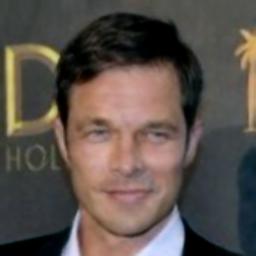}&
 \includegraphics[width=0.135\linewidth,height=0.135\linewidth, clip]{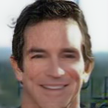}&
 \includegraphics[width=0.135\linewidth,height=0.135\linewidth, clip]
{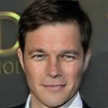}\\

 \includegraphics[width=0.135\linewidth,height=0.135\linewidth, clip]{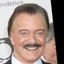}&
\includegraphics[width=0.135\linewidth,height=0.135\linewidth, clip]{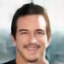}&
\includegraphics[width=0.135\linewidth,height=0.135\linewidth, clip]
{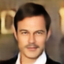}&
 \includegraphics[width=0.135\linewidth,height=0.135\linewidth, clip]{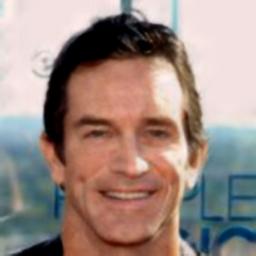}&
\includegraphics[width=0.135\linewidth,height=0.135\linewidth, clip]{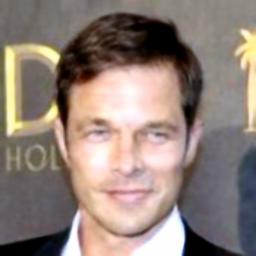}&
 \includegraphics[width=0.135\linewidth,height=0.135\linewidth, clip]{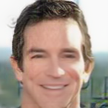}&
 \includegraphics[width=0.135\linewidth,height=0.135\linewidth, clip]
 {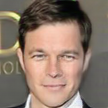}\\

\end{tabular}
\caption{Facial hair transfer. Our method vs. the literature baselines.}
  \label{fig:beard}
\end{figure*}

\begin{figure*}[t]
\centering  \begin{tabular}{c@{~}c@{~}c@{~}c@{~}c}
  \raisebox{.8cm}{\diagbox[width=0.15\linewidth, height=0.12\linewidth]{\raisebox{0pt}{\hspace*{0.00cm}Mouth}}{\raisebox{0cm}{\rotatebox{90}{Source}}}}
&
 \includegraphics[width=0.15\linewidth, clip]{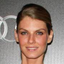}&
 \includegraphics[width=0.15\linewidth, clip]{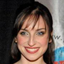}&
 \includegraphics[width=0.15\linewidth, clip]{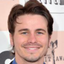}&
 \includegraphics[width=0.15\linewidth, clip]{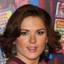}\\

 \includegraphics[width=0.15\linewidth, clip]{new_mouth_mat/real_domA_02.png}&
 \includegraphics[width=0.15\linewidth, clip]{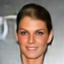}&
 \includegraphics[width=0.15\linewidth, clip]{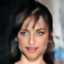}&
 \includegraphics[width=0.15\linewidth, clip]{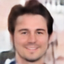}&
 \includegraphics[width=0.15\linewidth, clip]{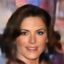}\\
 
 \includegraphics[width=0.15\linewidth, clip]{new_mouth_mat/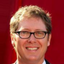}&
 \includegraphics[width=0.15\linewidth, clip]{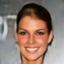}&
 \includegraphics[width=0.15\linewidth, clip]{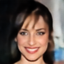}&
 \includegraphics[width=0.15\linewidth, clip]{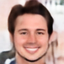}&
 \includegraphics[width=0.15\linewidth, clip]{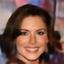}\\

 \includegraphics[width=0.15\linewidth, clip]{new_mouth_mat/real_domA_06.png}&
 \includegraphics[width=0.15\linewidth, clip]{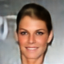}&
 \includegraphics[width=0.15\linewidth, clip]{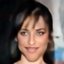}&
 \includegraphics[width=0.15\linewidth, clip]{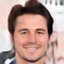}&
 \includegraphics[width=0.15\linewidth, clip]{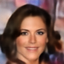}\\

 \includegraphics[width=0.15\linewidth, clip]{new_mouth_mat/real_domA_13.png}&
 \includegraphics[width=0.15\linewidth, clip]{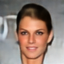}&
 \includegraphics[width=0.15\linewidth, clip]{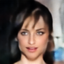}&
 \includegraphics[width=0.15\linewidth, clip]{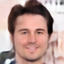}&
 \includegraphics[width=0.15\linewidth, clip]{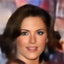}\\
\end{tabular}
\caption{A mix and match experiment for no smile to smile translation, using only domain $B$ images.}
  \label{fig:smile_matrix}

\centering  \begin{tabular}{c@{~}c@{~}c@{~}c@{~}c}
  \raisebox{.8cm}{\diagbox[width=0.15\linewidth, height=0.12\linewidth]{\raisebox{0pt}{\hspace*{0.00cm}Beard}}{\raisebox{0cm}{\rotatebox{90}{Source}}}}
&
 \includegraphics[width=0.15\linewidth, clip]{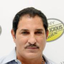}&
 \includegraphics[width=0.15\linewidth, clip]{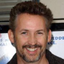}&
 \includegraphics[width=0.15\linewidth, clip]{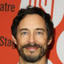}&
 \includegraphics[width=0.15\linewidth, clip]{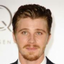}\\

 \includegraphics[width=0.15\linewidth, clip]{new_beard_mat/real_domA_02.png}&
 \includegraphics[width=0.15\linewidth, clip]{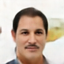}&
 \includegraphics[width=0.15\linewidth, clip]{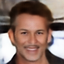}&
 \includegraphics[width=0.15\linewidth, clip]{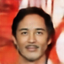}&
 \includegraphics[width=0.15\linewidth, clip]{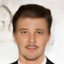}\\
 
 \includegraphics[width=0.15\linewidth, clip]{new_beard_mat/real_domA_04.png}&
 \includegraphics[width=0.15\linewidth, clip]{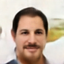}&
 \includegraphics[width=0.15\linewidth, clip]{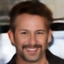}&
 \includegraphics[width=0.15\linewidth, clip]{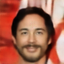}&
 \includegraphics[width=0.15\linewidth, clip]{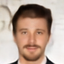}\\

 \includegraphics[width=0.15\linewidth, clip]{new_beard_mat/real_domA_06.png}&
 \includegraphics[width=0.15\linewidth, clip]{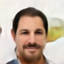}&
 \includegraphics[width=0.15\linewidth, clip]{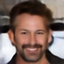}&
 \includegraphics[width=0.15\linewidth, clip]{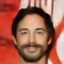}&
 \includegraphics[width=0.15\linewidth, clip]{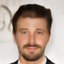}\\

 \includegraphics[width=0.15\linewidth, clip]{new_beard_mat/real_domA_07.png}&
 \includegraphics[width=0.15\linewidth, clip]{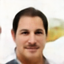}&
 \includegraphics[width=0.15\linewidth, clip]{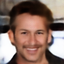}&
 \includegraphics[width=0.15\linewidth, clip]{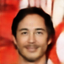}&
 \includegraphics[width=0.15\linewidth, clip]{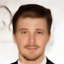}\\\
\end{tabular}
\caption{A mix and match experiment for the facial hair transfer, using only domain $B$ images.}
  \label{fig:beard_matrix}
\end{figure*}

\begin{figure}
  \includegraphics[width=\linewidth]{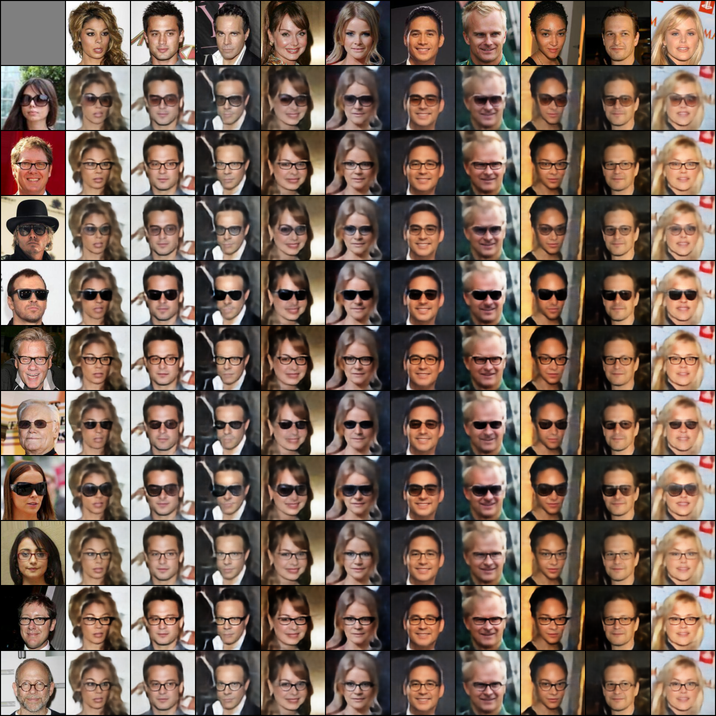}
  \caption{More eyewear transfer examples.}
  \label{fig:glasses_sheet}
\end{figure}

\begin{figure}
  \includegraphics[width=\linewidth]{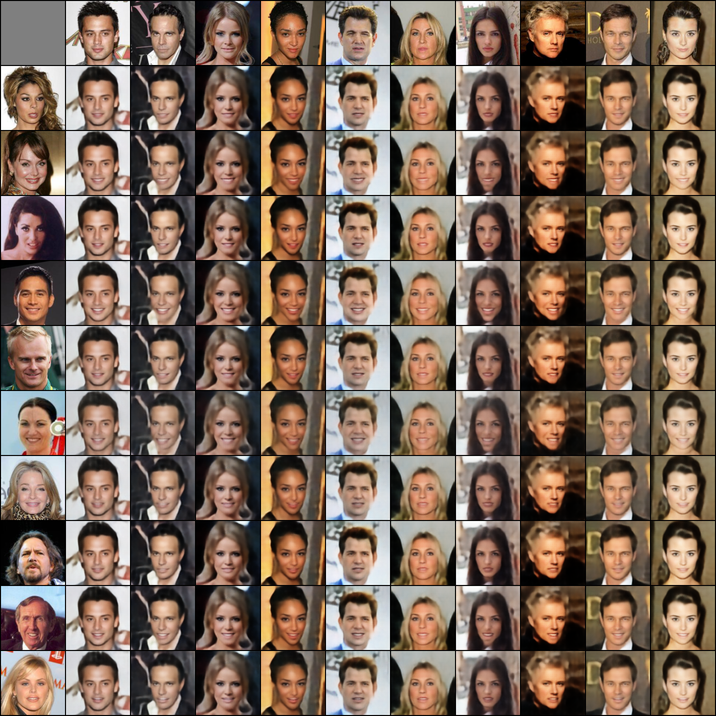}
  \caption{More smile transfer examples.}
  \label{fig:smile_sheet}
\end{figure}

\begin{figure}
  \includegraphics[width=\linewidth]{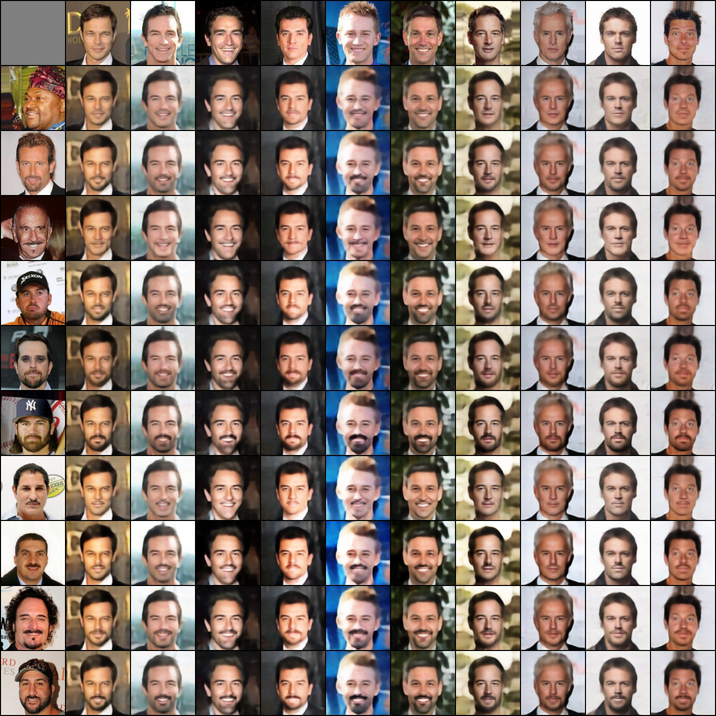}
  \caption{More facial hair transfer examples.}
  \label{fig:beard_sheet}
\end{figure}

\FloatBarrier

\section{Preliminaries}

\subsection{Notations and Terminology}

In this section we provide notations and terminology that are were not introduced in Sec.~\ref{sec:analysis} but are necessary for the proofs of the claims in this section. 

We say that three random variables (discrete or continuous) $X_1, X_2, X_3$ form a Markov chain, indicated with $X_1 \to X_2 \to X_3$, if $\mathbb{P}[X_3|X_2, X_1] = \mathbb{P}[X_3|X_2]$. The Data Processing Inequality (DPI) for a Markov chain $X_1 \to X_2 \to X_3$ ensures that $I(X_1; X_3) \leq \min\left(I(X_1; X_2),I(X_2; X_3)\right)$. In particular, it holds for $X_2 = f(X_1)$ and $X_3 = g(X_2)$, where $f,g$ are deterministic processes. 

We denote by $x \sim \log\mathcal{N}(\mu,\sigma^2)$ a random variable that is distributed by a log-normal distribution, i.e., $\log x \sim \mathcal{N}(\mu, \sigma^2)$. 
We consider that the mean and variance of a log-normal distribution $\log \mathcal{N}(\mu,\sigma^2)$ are $\exp(\mu + \sigma^2/2)$ and $(\exp(\sigma^2) - 1) \exp(2\mu + \sigma^2)$ respectively. We denote by $\mW \bigodot \mU := (\mW_{k,j} \cdot \mU_{k,j})_{k\leq m,j \leq m}$ the Hadamard product of two matrices $\mW, \mU \in \mathbb{R}^{m \times n}$. For a given vector $\vx \in \mathbb{R}^m$, we denote $\textnormal{dim}(\vx) := m$ and for a matrix $\mW \in \mathbb{R}^{m \times n}$, we denote $\textnormal{dim}(\mW) := mn$. In addition, we denote $\vx^2 = \vx \bigodot \vx = (x^2_1,\dots,x^2_m)$ and $\mathbb{E}[\vx] = (\mathbb{E}[x_1],\dots,\mathbb{E}[x_m])$. The indicator function, is denoted by $\one[x]$ for a boolean variable $x \in \{\textnormal{true},\textnormal{false}\}$ (i.e., $\one[x]=1$ if $x = \textnormal{true}$ and $\one[x]=0$ o.w).

\subsection{Lemmas}\label{app:lemmas}

In this section, we provide useful lemmas that aid in the proofs of our main results.

\begin{lemma}\label{lem:TC} Let $\vx = (x_1,\dots,x_n) \in \sR^n$ be a random vector. Let $\mu_1,\dots,\mu_n :\sR \to \sR$ be continuous invertible functions and we denote $\mu(\vx) := (\mu_1(x_1),\dots,\mu_n(x_n))$. Then, $TC(\vx) = TC(\mu(\vx))$.
\end{lemma}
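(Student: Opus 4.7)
The plan is to prove this by recognizing that the total correlation is a KL divergence, and KL divergence is invariant under measurable bijections applied to both arguments. Since $\mu$ is coordinate-wise, it sends the product of marginals to the product of the marginals of $\mu(\vx)$, which gives the result cleanly.

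First I would unpack the definition: $TC(\vx) = \KL(P_{\vx} \,\|\, \prod_i P_{x_i})$, where $P_{\vx}$ denotes the joint law and $P_{x_i}$ the $i$-th marginal. Then I would invoke the standard fact that KL divergence is invariant under measurable bijections: if $T$ is a bi-measurable bijection on the underlying space, then $\KL(T_*P \,\|\, T_*Q) = \KL(P\,\|\,Q)$, since the Radon-Nikodym derivative transforms as $\frac{d(T_*P)}{d(T_*Q)} = \frac{dP}{dQ}\circ T^{-1}$ and a change of variables under $T$ preserves the integral. Each $\mu_i:\sR\to\sR$ is continuous and invertible, hence a homeomorphism, so the product map $\mu(x_1,\dots,x_n) = (\mu_1(x_1),\dots,\mu_n(x_n))$ is a bi-measurable bijection from $\sR^n$ to $\sR^n$.

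Next I would verify the two pushforward identities. For the joint law, $\mu_*P_{\vx}$ is by definition the law of $\mu(\vx)$, so this side is immediate. The crucial step is identifying
\[
\mu_*\Big(\prod_i P_{x_i}\Big) \;=\; \prod_i P_{\mu_i(x_i)}.
\]
This follows because $\mu$ acts coordinate-wise: for any product of measurable rectangles $\prod_i A_i$,
\[
\mu_*\Big(\prod_i P_{x_i}\Big)\Big(\prod_i A_i\Big) = \prod_i P_{x_i}\big(\mu_i^{-1}(A_i)\big) = \prod_i P_{\mu_i(x_i)}(A_i),
\]
and agreement on rectangles extends to the product $\sigma$-algebra by the $\pi$-$\lambda$ theorem (equivalently, Fubini).

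Combining these two identities with the invariance of KL under bijections yields
\[
TC(\mu(\vx)) = \KL\!\Big(\mu_*P_{\vx} \,\Big\|\, \mu_*\prod_i P_{x_i}\Big) = \KL\!\Big(P_{\vx} \,\Big\|\, \prod_i P_{x_i}\Big) = TC(\vx),
\]
which is the claim. The only mild subtlety is the pushforward-of-product computation above; everything else is a direct application of the change-of-variables principle for KL divergence, so I would present that identity as the focal point of the proof and treat the rest as bookkeeping. Note that because the Jacobians cancel between numerator and denominator of the Radon-Nikodym derivative, no differentiability assumption on the $\mu_i$ is actually needed beyond measurability of $\mu_i$ and $\mu_i^{-1}$, which continuity plus invertibility of a map $\sR\to\sR$ provides.
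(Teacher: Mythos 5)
Your proposal is correct and follows essentially the same route as the paper: both express $TC$ as a KL divergence against the product of marginals, invoke invariance of KL under (continuous) invertible transformations, and observe that the coordinate-wise map sends the product of marginals to the product of the transformed marginals. The paper phrases that last step by introducing an auxiliary independent vector $\bar{\vx}$ with the same marginals, whereas you verify the pushforward identity on rectangles; these are the same argument, with yours supplying slightly more justification for the invariance step.
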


\begin{proof} First, we consider that:
\begin{equation}
TC(\vx) = \KL\left(\mathbb{P}[\vx] \Big\| \prod^{n}_{i=1} \mathbb{P}[x_i] \right) = \KL\left(\mathbb{P}[\vx] \Big\| \mathbb{P}[\bar{\vx}] \right)
\end{equation}
where, $\bar{\vx} := (\bar{x}_1,\dots,\bar{x}_n)$ is a vector of independent random variables, such that $\bar{x}_i$ is distributed, according to the marginal distribution of $x_i$. 

KL-divergence is invariant to applying continuous invertible transformations, i.e., $\KL(X\|Y) = \KL(\mu(X)\|\mu(Y))$ for $\mu$ that is continuous and invertible. Therefore, 
\begin{equation}
TC(\vx) = \KL\left(\mathbb{P}[\mu(\vx)] \Big\| \mathbb{P}[\mu(\bar{\vx})] \right) = \KL\left(\mathbb{P}[\mu(\vx)] \Big\| \prod^{n}_{i=1} \mathbb{P}[\mu_i(\bar{x}_i)] \right) = TC(\mu(\vx))
\end{equation}
\end{proof}

\begin{lemma}\label{lem:entropyUpper} Let $p \in [0,1]$. Then, $H(p) \leq 2\log(2) \sqrt{p(1-p)}$.
\end{lemma}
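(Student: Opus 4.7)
We want to show the binary entropy $H(p) = -p\log p - (1-p)\log(1-p)$ satisfies $H(p) \le 2\log(2)\sqrt{p(1-p)}$. Since both sides are invariant under $p \mapsto 1-p$, we may restrict to $p \in [0,1/2]$. I would then make the tangent-line-type substitution $v = 1-2p \in [0,1]$, which gives $\sqrt{p(1-p)} = \tfrac{1}{2}\sqrt{1-v^2}$ and, after short algebra,
$$H(p) = \log 2 \;-\; \tfrac{1}{2}\bigl[(1-v)\log(1-v) + (1+v)\log(1+v)\bigr].$$
So the inequality is equivalent to $\psi(v) \ge 0$ for $v \in [0,1]$, where
$$\psi(v) := \tfrac{1}{2}\bigl[(1-v)\log(1-v)+(1+v)\log(1+v)\bigr] - \log(2)\bigl(1-\sqrt{1-v^2}\bigr).$$

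A direct computation yields the clean boundary data $\psi(0)=0$, $\psi(1)=0$, $\psi'(v) = \tfrac{1}{2}\log\tfrac{1+v}{1-v} - \tfrac{(\log 2)\,v}{\sqrt{1-v^2}}$ with $\psi'(0)=0$, and most importantly
$$\psi''(v) \;=\; \frac{1}{1-v^2} \;-\; \frac{\log 2}{(1-v^2)^{3/2}} \;=\; \frac{\sqrt{1-v^2} - \log 2}{(1-v^2)^{3/2}}.$$
Because $\log 2 \in (0,1)$, the numerator vanishes at a \emph{unique} point $v^\star := \sqrt{1-(\log 2)^2} \in (0,1)$, with $\psi''>0$ on $[0,v^\star)$ and $\psi''<0$ on $(v^\star,1)$. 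So $\psi$ is convex on the left piece and concave on the right piece, and any attempt to push through a single global convexity argument must fail; one genuinely has to split at $v^\star$.

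On $[0,v^\star]$, convexity combined with $\psi(0)=\psi'(0)=0$ gives $\psi(v)=\int_0^v\!\int_0^s \psi''(r)\,dr\,ds \ge 0$, and in particular $\psi(v^\star)\ge 0$. On $[v^\star,1]$, concavity implies $\psi$ lies above the chord joining $(v^\star,\psi(v^\star))$ and $(1,0)$; since both chord endpoints are $\ge 0$, the chord is non-negative on $[v^\star,1]$, and therefore so is $\psi$. Combining the two pieces yields $\psi \ge 0$ throughout $[0,1]$, which is the claimed bound.

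The main obstacle is conceptual rather than computational: a naive global-convexity or pointwise-derivative-comparison proof fails (for instance, $H'(p)$ and the derivative of the RHS actually cross in $(0,1/2)$, so one cannot integrate an inequality between derivatives). The trick is the change of variable to $v$, which reveals that $\psi''$ depends on $v$ only through $\sqrt{1-v^2}$ and therefore changes sign exactly once; once this is in hand, the convex/concave split plus the chord argument is mechanical.
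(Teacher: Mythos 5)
Your proof is correct: the substitution $v=1-2p$, the computation of $\psi$, $\psi'$, $\psi''$, the location of the unique inflection point $v^\star=\sqrt{1-(\log 2)^2}$, and the convex/concave split with the chord argument all check out (and your side remark that a direct derivative comparison fails is accurate — the derivatives of the two sides do cross in $(0,1/2)$). The comparison with the paper is somewhat one-sided here, because the paper does not actually prove this lemma at all: its entire ``proof'' is a citation to an external reference on entropy bounds. So your argument is not a different route so much as the only self-contained one on the table; it supplies, from elementary calculus, exactly the verification the paper outsources. What the paper's approach buys is brevity (the inequality $H(p)\le 2\log(2)\sqrt{p(1-p)}$ is a known sharp bound with equality at $p=1/2$); what yours buys is a complete, checkable proof that makes the appendix self-contained, at the cost of roughly a page of careful second-derivative analysis. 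If anything, one could shorten your right-hand piece slightly: since $\psi(0)=\psi(1)=0$ and $\psi'$ vanishes at $0$, the single sign change of $\psi''$ already forces $\psi$ to be nonnegative (it rises from $0$, then is concave until it returns to $0$ at $v=1$), but your chord formulation is equivalent and perfectly rigorous.
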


\proof{See~\cite{1432228}.
}

The following lemma is a modification of Claim~2.1 in~\citep{regev}.

\begin{lemma}\label{lem:fanoregev} Let $X$ and $Y$ be two random variables. Assume that there is a function (i.e., a deterministic process) $F$, such that $\mathbb{P}[F(Y) = X] \geq q \geq 1/2$. Then, $I(X;Y) \geq q H(X) - H(q)$.  
\end{lemma}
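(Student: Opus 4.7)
The plan is a Fano-style argument tailored to produce the sharper bound with $H(X)$ in place of the more classical $\log|\mathcal{X}|$. Since $F(Y)$ is a deterministic function of $Y$, the triple $X \to Y \to F(Y)$ forms a Markov chain, so the data processing inequality gives $I(X;Y) \geq I(X;F(Y))$. Writing $I(X;F(Y)) = H(X) - H(X \mid F(Y))$, it therefore suffices to establish the upper bound $H(X \mid F(Y)) \leq H(q) + (1-q)H(X)$; rearranging then yields the claim.

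To obtain this bound I would introduce the binary error indicator $E := \mathbf{1}[F(Y) \neq X]$ and set $\pi := \mathbb{P}[F(Y) = X] \geq q$. Since $E$ is a deterministic function of $(X, F(Y))$, the chain rule combined with the fact that conditioning reduces entropy gives
\[ H(X \mid F(Y)) \;=\; H(E \mid F(Y)) + H(X \mid E, F(Y)) \;\leq\; H(\pi) + H(X \mid E, F(Y)). \]
The binary entropy function is monotone decreasing on $[1/2,1]$, and $\pi \geq q \geq 1/2$, so $H(\pi) \leq H(q)$, which handles the first summand.

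For the second summand I would split according to the value of $E$: on the event $\{E=0\}$ we have $X = F(Y)$ almost surely, so $H(X \mid E=0, F(Y)) = 0$, and hence $H(X \mid E, F(Y)) = (1-\pi)\, H(X \mid E=1, F(Y))$. Bounding $1-\pi \leq 1-q$ and $H(X \mid E=1, F(Y)) \leq H(X)$ then gives $H(X \mid E, F(Y)) \leq (1-q)H(X)$. Assembling the two estimates produces $H(X \mid F(Y)) \leq H(q) + (1-q)H(X)$, and the claim follows.

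The main obstacle is the step $H(X \mid E=1, F(Y)) \leq H(X)$. The conditional distribution of $X$ on $\{E=1,\; F(Y) = \hat{x}\}$ is supported on $\mathrm{supp}(X) \setminus \{\hat{x}\}$, so its entropy is at most $\log(|\mathrm{supp}(X)|-1)$, and this is $\leq H(X)$ precisely when $X$ is sufficiently spread out on its support, e.g.\ near-uniform. This is the regime relevant to the paper's application, in which $X$ plays the role of a data random variable concentrated on well-separated, nearly equimass atoms. In the general discrete case the same recipe yields the classical Fano bound with $\log(|\mathrm{supp}(X)|-1)$ in place of $H(X)$, and a spread-out assumption on $X$ closes the gap to the stated form.
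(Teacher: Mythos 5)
Your argument is essentially the paper's own: data processing to reduce to $I(X;F(Y))$, the error-indicator decomposition $H(X\mid F(Y)) = H(E\mid F(Y)) + H(X\mid E,F(Y))$, the bound $H(E\mid F(Y))\le H(q)$ via monotonicity of binary entropy on $[1/2,1]$, and finally the bound of the residual conditional entropy by $(1-q)H(X)$. The step you single out as the ``main obstacle'' is precisely the step the paper performs without comment: it writes $H(X\mid \one_{X=F(Y)}=0,F(Y))\le H(X)$ as though conditioning on the error \emph{event} were conditioning on a random variable. Your suspicion is warranted --- that inequality is false in general, and in fact the lemma as stated fails without an additional assumption on $X$. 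Take $X\in\{0,1,2,3\}$ with $\mathbb{P}[X=0]=3/4$ and $\mathbb{P}[X=i]=1/12$ for $i\ne 0$, let $Y$ be independent of $X$, and let $F\equiv 0$. Then $\mathbb{P}[F(Y)=X]=q=3/4$, yet
\begin{equation*}
qH(X)-H(q)=\tfrac{3}{16}\log 3-\tfrac{1}{4}H(1/4)>0=I(X;Y).
\end{equation*}
So your closing diagnosis is the correct one: the recipe, carried out rigorously, yields only the classical Fano bound with $\log(|\mathrm{supp}(X)|-1)$ in place of $H(X)$, and the stated form requires $X$ to be (near-)uniform --- which is exactly the setting of Regev's original Claim~2.1, where $X$ is uniform and $H(X)=\log|\mathcal{X}|$, so that $H(X\mid E,\hat{X})\le\log(|\mathcal{X}|-1)\le H(X)$ goes through. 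Since the downstream application (Lem.~\ref{lem:delta1}, with $X=\vb\sim D_B$) does not assume uniformity, the hypothesis would need to be added there as well. In short: same approach as the paper, but your write-up is more careful than the paper's on exactly the point where the proof actually breaks.
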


\begin{proof}
By the data processing inequality,
\begin{equation}
\begin{aligned} 
I(X;Y) &\geq I(X;F(Y))\\
&= H(X) - H(X|F(Y))\\
&= H(X) - H(\one_{X=F(Y)},X|F(Y))\\
&= H(X) - (H(\one_{X=F(Y)}|F(Y)) + H(X|\one_{X=F(Y)},F(Y)))\\
\end{aligned}
\end{equation}
Since conditioning does not increase entropy, $H(\one_{X=F(Y)}|F(Y)) \leq H(\one_{X=F(Y)}) \leq H(q)$. In addition, 
\begin{equation}
\begin{aligned}
H(X|\one_{X=F(Y)},F(Y)) = &\mathbb{P}[\one_{X=F(Y)}=0] \cdot H(X|\one_{X=F(Y)}=0,F(Y)) \\
&+ \mathbb{P}[\one_{X=F(Y)}=1] \cdot H(X|\one_{X=F(Y)}=1,F(Y)) \\
= &\mathbb{P}[\one_{X=F(Y)}=0] \cdot H(X|\one_{X=F(Y)}=0,F(Y))  \\
\leq &(1-q) H(X|\one_{X=F(Y)}=0,F(Y)) \\
\leq &(1-q) H(X) \\
\end{aligned}
\end{equation}
Therefore, we conclude that, $I(X;Y) \geq qH(X) - H(q)$.
\end{proof} 

\begin{lemma}\label{lem:delta1} Let $\vb \sim D_B$ be a distribution over a discrete set $\sX_B \subset \mathbb{R}^M$ and $h_{\vv} : \mathbb{R}^M \to \mathbb{R}^M$ is a (possibly random) function. Assume that $\forall \vx_1\neq \vx_2 \in \sX_B: \|\vx_1-\vx_2\|_1 > \Delta$. Let $\mathbb{P}[\|h_{\vv}(\vb) - \vb\|_1 \leq \Delta] \geq q \geq 1/2$. Then, $I(h_{\vv}(\vb);\vb) \geq q H(\vb) - H(q)$.
\end{lemma}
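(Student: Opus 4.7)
The plan is to reduce the statement to Lemma~\ref{lem:fanoregev}, applied with $X := \vb$ and $Y := h_{\vv}(\vb)$. That lemma produces exactly the claimed bound $I(Y;X) \geq q H(X) - H(q)$ once we can exhibit a \emph{deterministic} decoder $F$ satisfying $\mathbb{P}[F(Y) = X] \geq q$; and the hypothesis $q \geq 1/2$ is also what Lemma~\ref{lem:fanoregev} needs. So the whole proof reduces to constructing the right $F$ and verifying its recovery guarantee.

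First I would construct $F:\mathbb{R}^M \to \sX_B$ as the $\ell_1$ nearest-neighbor map onto $\sX_B$, breaking ties by any fixed rule (e.g., lexicographic) so that $F$ depends only on its argument and not on the hidden randomness $\vv$ of $h_{\vv}$. This determinism is essential, since Lemma~\ref{lem:fanoregev} explicitly calls for a deterministic $F$.

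Next I would verify the decoding property on the good event $E := \{\|h_{\vv}(\vb) - \vb\|_1 \leq \Delta\}$. For any $\vb' \in \sX_B$ distinct from $\vb$, the triangle inequality gives
\begin{equation*}
\|h_{\vv}(\vb) - \vb'\|_1 \;\geq\; \|\vb - \vb'\|_1 - \|h_{\vv}(\vb) - \vb\|_1,
\end{equation*}
and the separation assumption $\|\vb - \vb'\|_1 > \Delta$ combined with $\|h_{\vv}(\vb) - \vb\|_1 \leq \Delta$ (interpreted with the strictness built into the separation hypothesis) forces $h_{\vv}(\vb)$ to be strictly closer to $\vb$ than to any other element of $\sX_B$. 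Hence $F(h_{\vv}(\vb)) = \vb$ on $E$, and since $\mathbb{P}[E] \geq q$ we obtain $\mathbb{P}[F(h_{\vv}(\vb)) = \vb] \geq q$. Plugging into Lemma~\ref{lem:fanoregev} closes the proof.

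The main obstacle is the constant-bookkeeping in the last step: the naive triangle bound only yields $\|h_{\vv}(\vb) - \vb'\|_1 > 0$, which is not by itself enough to guarantee $\vb$ is the \emph{strict} nearest neighbor. To make the decoding argument airtight one must either use the full strictness of the separation hypothesis together with a symmetric tie-breaking rule, or quietly strengthen the constant so that the separation dominates twice the error. Either variant gives the same conclusion and the overall reduction to Lemma~\ref{lem:fanoregev} is otherwise entirely mechanical; no tools beyond the triangle inequality and Fano's inequality (already packaged in Lemma~\ref{lem:fanoregev}) are needed.
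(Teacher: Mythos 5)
Your proof is essentially identical to the paper's: it defines the $\ell_1$ nearest-neighbor decoder $F(\vu) := \argmin_{\vx \in \sX_B}\|\vu-\vx\|_1$, asserts that $F(h_{\vv}(\vb))=\vb$ on the event $\{\|h_{\vv}(\vb)-\vb\|_1\leq\Delta\}$, and invokes Lem.~\ref{lem:fanoregev} with $X=\vb$, $Y=h_{\vv}(\vb)$. The constant-bookkeeping issue you flag at the end is real and is glossed over in the paper's own proof as well --- with separation only $>\Delta$ and error as large as $\Delta$, the output can land strictly closer to a different codeword, so the correct repair is the second option you name (separation dominating twice the error, i.e., decoding on $\{\|h_{\vv}(\vb)-\vb\|_1\leq\Delta/2\}$), not the tie-breaking one.
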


\begin{proof} Let $F(\vu) := \argmin_{\vx \in \sX_B} \|\vu - \vx\|_1$. Since the members of $\sX_B$ are $\Delta$-distant from each other, if $\|h_{\vv}(\vb) - \vb\|_1 \leq \Delta$, then, $F(h_{\vv}(\vb)) = \vb$. Therefore, we have: 
\begin{equation}
\mathbb{P}[F(h_{\vv}(\vb)) = \vb] \geq \mathbb{P}[\|h_{\vv}(\vb) - \vb\|_1 \leq \Delta] \geq q \geq 1/2
\end{equation}
By Lem.~\ref{lem:fanoregev}, for $X :\leftarrow \vb$, $Y :\leftarrow h_{\vv}(\vb)$ and $F :\leftarrow F$, we have: $I(h_{\vv}(\vb);\vb) \geq q H(\vb) - H(q)$.
\end{proof}

The following lemma is an example of three uncorrelated variables $X,Y,Z$, such that there is a dimensionality reducing linear transformation over them that preserves all of their information.

\begin{lemma}\label{lem:XplusY} Let $X$ and $Y$ be two independent uniform distributions over $[-1,1]$ and $Z=(X+Y)^2$. Then, the transformation $T(x,y,z) = (x,y)$ satisfies $I(X,Y,Z;T(X,Y,Z)) = H(X,Y,Z)$ and $\textnormal{Cov}(X,Y) = \textnormal{Cov}(X,Z) = \textnormal{Cov}(Y,Z)$.
\end{lemma}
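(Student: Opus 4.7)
The plan is a direct computation for both assertions, since the lemma is really a worked example rather than a deep claim. I will first address the information-theoretic identity and then the three covariances, in that order.

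For the identity $I(X,Y,Z;T(X,Y,Z)) = H(X,Y,Z)$, the key observation is that $Z = (X+Y)^2$ is a deterministic (Borel-measurable) function of $(X,Y)$. Consequently the map $\Phi:(X,Y)\mapsto(X,Y,Z)$ is a measurable bijection from the support of $(X,Y)$ onto the support of $(X,Y,Z)$, whose measurable inverse is precisely $T$. Hence the $\sigma$-algebras generated by $(X,Y,Z)$ and by $T(X,Y,Z)=(X,Y)$ coincide, so conditioning on $T(X,Y,Z)$ determines $(X,Y,Z)$ almost surely. Therefore $H(X,Y,Z\mid T(X,Y,Z)) = 0$ in the appropriate sense, and
\[
I(X,Y,Z;T(X,Y,Z)) \;=\; H(X,Y,Z) - H(X,Y,Z\mid T(X,Y,Z)) \;=\; H(X,Y,Z),
\]
which is the desired identity.

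For the three covariances I would compute each one directly. First, $\mathrm{Cov}(X,Y)=0$ by independence of $X$ and $Y$ combined with $\mathbb{E}[X]=\mathbb{E}[Y]=0$. For $\mathrm{Cov}(X,Z)$, expand
\[
\mathrm{Cov}(X,Z) \;=\; \mathbb{E}[X(X+Y)^2] - \mathbb{E}[X]\,\mathbb{E}[(X+Y)^2] \;=\; \mathbb{E}[X^3] + 2\,\mathbb{E}[X^2Y] + \mathbb{E}[XY^2].
\]
Since $X$ is symmetric on $[-1,1]$, every odd moment of $X$ vanishes, so $\mathbb{E}[X]=\mathbb{E}[X^3]=0$; combined with $\mathbb{E}[Y]=0$ and the independence of $X$ and $Y$ (used to factor $\mathbb{E}[X^2Y]=\mathbb{E}[X^2]\,\mathbb{E}[Y]$ and $\mathbb{E}[XY^2]=\mathbb{E}[X]\,\mathbb{E}[Y^2]$), all three summands are zero. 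By the $X\leftrightarrow Y$ symmetry of the construction, $\mathrm{Cov}(Y,Z)=0$ as well.

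There is no substantive obstacle; the only subtle point is interpretational. The joint $(X,Y,Z)$ is supported on the two-dimensional surface $\{(x,y,(x+y)^2)\,:\,x,y\in[-1,1]\}$ inside $\mathbb{R}^3$ and thus has no Lebesgue density on $\mathbb{R}^3$, so $H(X,Y,Z)$ must be read via its invertible identification with $(X,Y)$; under that reading it equals the two-dimensional differential entropy $H(X,Y)=2\log 2$. With this convention the identity is clean, and the lemma serves its advertised role: exhibiting three pairwise uncorrelated variables for which a dimensionality-reducing linear map $T$ nevertheless preserves the full joint information, thereby showing that uncorrelatedness alone is not enough to rule out lossless compression.
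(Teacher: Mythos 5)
Your proof is correct and follows essentially the same route as the paper's: the covariance computation is identical (independence plus vanishing odd moments of $X$), and your mutual-information argument rests on the same core fact the paper uses, namely that $T$ restricted to the support $\{(x,y,(x+y)^2)\}$ is an invertible identification with $[-1,1]^2$, so no information is lost (the paper phrases this as invariance of mutual information under homeomorphisms, you as $H(X,Y,Z\mid T(X,Y,Z))=0$). Your closing remark that $H(X,Y,Z)$ must be read through this identification, since the joint law is singular in $\mathbb{R}^3$, is a subtlety the paper leaves implicit and is worth having made explicit.
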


\begin{proof} Since $X$ and $Y$ are independent, their covariance is zero. By the definition of $X$ and $Y$, we have: $\mathbb{E}[X] = \mathbb{E}[Y] = \mathbb{E}[X^3] = 0$. Therefore,
\begin{equation}
\begin{aligned}
\textnormal{Cov}(Y,Z) &= \textnormal{Cov}(X,Z) \\
&= \mathbb{E}[X(X+Y)^2] - \mathbb{E}[X]\mathbb{E}[(X+Y)^2]\\
&= \mathbb{E}[X^3] + 2\mathbb{E}[X^2]\mathbb{E}[Y] +\mathbb{E}[X] \mathbb{E}[Y^2] - \mathbb{E}[X]\mathbb{E}[(X+Y)^2] = 0\\
\end{aligned}
\end{equation}
Finally, we consider that $T$ is a homeomorphic transformation $T:(x,y,(x+y)^2) \mapsto (x,y)$ (between the manifolds $\{(x,y,z) \;\vert\; x,y \in [-1,1], z = (x+y)^2\}$ and $[-1,1]^2$) and mutual information is invariant to applications of homeomorphic transformations, i.e., $I(X;Y) = I(\mu(X);\nu(Y))$ for homeomorphisms $\mu$ and $\nu$ over the sample spaces of $X$ and $Y$ (resp.). Therefore, $I(X,Y,Z;T(X,Y,Z)) = I(X,Y,Z;X,Y,Z) = H(X,Y,Z)$.
\end{proof}

\section{Proofs of the Main Results}\label{app:main}

\disent*

\begin{proof} Let $g^{*} \in \argmin_{g \in \mathcal{M}} \left\{R_{D_{A,B}}[g \circ f,y] + R_{D_{B,\hat{B}}}[g\circ f,y] \right\}$. Since the loss $\ell$ obeys the triangle inequality,
\begin{equation}
R_{D_{A,B}}[g\circ f,y] \leq R_{D_{A,B}}[g\circ f,g^*\circ f] + R_{D_{A,B}}[g^*\circ f,y]
\end{equation}
By the definition of discrepancy,
\begin{equation}
\begin{aligned}
R_{D_{A,B}}[g\circ f,y] \leq R_{D_{B,\hat{B}}}[g\circ f,g^*\circ f] + R_{D_{A,B}}[g^*\circ f,y] + \disc_{\mathcal{M}}(f \circ D_{A,B}, f \circ D_{B,\hat{B}})
\end{aligned}
\end{equation}
Again, by the triangle inequality,
\begin{equation}
\begin{aligned}
R_{D_{A,B}}[g\circ f,y] \leq &R_{D_{B,\hat{B}}}[g\circ f,y] + R_{D_{A,B}}[g^*\circ f,y] + R_{D_{B,\hat{B}}}[g^*\circ f,y]\\
&+ \disc_{\mathcal{M}}(f \circ D_{A,B}, f \circ D_{B,\hat{B}})\\
=& R_{D_{B,\hat{B}}}[g\circ f,y] + \min_{g \in \mathcal{M}} \left\{ R_{D_{A,B}}[g\circ f,y] + R_{D_{B,\hat{B}}}[g\circ f,y] \right\}\\
&+ \disc_{\mathcal{M}}(f \circ D_{A,B}, f \circ D_{B,\hat{B}})
\end{aligned}
\end{equation}
\end{proof}

\reduction*

\begin{proof} By the definition of discrepancy, and since $e_1(\vb) \indep e_2(\vb)$, we have:
\begin{equation}
\begin{aligned}
&\disc_{\mathcal{M}} (f \circ D_{A,B}, f \circ D_{B,\hat{B}}) \\
= &\sup_{c_1,c_2 \in \mathcal{M}} \Big\vert \mathbb{E}_{e_1(\va),e_2(\vb)} \ell(c_1(e_1(\va),e_2(\vb)), c_2(e_1(\va),e_2(\vb))) \\
&\qquad\qquad - \mathbb{E}_{e_1(\vb),e_2(\vb)} \ell(c_1(e_1(\vb),e_2(\vb)), c_2(e_1(\vb),e_2(\vb)))\Big\vert \\
= &\sup_{c_1,c_2 \in \mathcal{M}} \Big\vert \mathbb{E}_{e_2(\vb)} \Big\{ \mathbb{E}_{e_1(\va)} \ell(c_1(e_1(\va),e_2(\vb)), c_2(e_1(\va),e_2(\vb))) \\
&\qquad\qquad - \mathbb{E}_{e_1(\vb)} \ell(c_1(e_1(\vb),e_2(\vb)), c_2(e_1(\vb),e_2(\vb))) \Big\}\Big\vert \\
\end{aligned}
\end{equation}
By $|\mathbb{E}[x]|\leq \mathbb{E}[|x|]$ and $\mathbb{E}_{\vx} [\sup_{{\vy} \in \sY} f(\vx,\vy)]\leq \sup_{\vy \in \sY} \mathbb{E}_{\vx}[f(\vx,\vy)]$, we have:
\begin{equation}
\begin{aligned}
&\disc_{\mathcal{M}} (f \circ D_{A,B}, f \circ D_{B,\hat{B}}) \\
\leq &\mathbb{E}_{e_2(\vb)} \sup_{c_1,c_2 \in \mathcal{M}} \Big\vert \Big\{ \mathbb{E}_{e_1(\va)} \ell(c_1(e_1(\va),e_2(\vb)), c_2(e_1(\va),e_2(\vb))) \\
&\qquad\qquad\qquad\qquad - \mathbb{E}_{e_1(\vb)} \ell(c_1(e_1(\va),e_2(\vb)), c_2(e_1(\vb),e_2(\vb))) \Big\}\Big\vert \\
\leq &\sup_{\vy \in \mathbb{R}^{E_2}} \sup_{c_1,c_2 \in \mathcal{M}} \Big\vert \Big\{ \mathbb{E}_{e_1(\va)} \ell(c_1(e_1(\va),\vy), c_2(e_1(\va),\vy)) \\
&\qquad\qquad\qquad\qquad - \mathbb{E}_{e_1(\vb)} \ell(c_1(e_1(\vb),\vy), c_2(e_1(\vb),\vy)) \Big\}\Big\vert \\
\end{aligned}
\end{equation}
By the definition of $\mathcal{M}$, for any $c \in \mathcal{M}$ and a fixed vector, $\vy \in \mathbb{R}^{E_2}$, there is a function $u \in \mathcal{M}'$, such that for every $\vx \in \mathbb{R}^{E_1}$, we have: $c(\vx,\vy) = u(\vx)$. Therefore, we can rewrite the last equation as follows:
\begin{equation}
\begin{aligned}
&\disc_{\mathcal{M}} (f \circ D_{A,B}, f \circ D_{B,\hat{B}}) \\
\leq &\sup_{u_1,u_2 \in \mathcal{M}'} \Big\vert \Big\{ \mathbb{E}_{e_1(\va)} \ell(u_1(e_1(\va)), u_2(e_1(\va)) - \mathbb{E}_{e_1(\vb)} \ell(u_1(e_1(\vb)), u_2(e_1(\vb)) \Big\}\Big\vert \\
= &\disc_{\mathcal{M}'}(e_1 \circ D_A, e_1 \circ D_B) \\
\end{aligned}
\end{equation}
\end{proof}

\subsection{Emergence of Disentangled Representations}

In this section, we employ the theory of~\citep{DBLP:journals/jmlr/AchilleS18} in order to show the emergence of disentangled representations, when an autoencoder $h = g \circ f$ generalizes well. Our analysis shows that by learning an autoencoder such that the encoder $f$ has log-normal regularization, there is a high likelihood of learning disentangled representations. We note that until now, we treated the autoencoder $h$ as a function of two variables ($\va,\vb$ or $\vb,\vb$). From now on, if the two variables are the same, then, we simply write $h_{\vv}(\vb)$. In addition, by Eq.~\ref{eq:12}, instead of writing, $R_{D_{B,\hat{B}}}[g\circ f,y]$ we can simply write $R_{D_{B}}[h,\textnormal{Id}] := \mathbb{E}_f \mathbb{E}_{\vb \sim D_B} [\ell(g \circ f(\vb),\vb)]$.

The framework of~\cite{DBLP:journals/jmlr/AchilleS18} relies on a few assumptions, which are imported to our case. The algorithm is provided with $m$ i.i.d samples $\sS_B = \{\vb^i\}^{m}_{i=1}$ from the distribution $D_B$ and trains an encoder-decoder neural network $h_{\vv} = g_{\vu} \circ f_{\vw}$, such that the parameters of the encoder, $f_{\vv}$, have log-normal perturbations. Formally, we learn a mapping $h_{\vv}$ of the form $h_{\vv}(\vx) = \phi(\mW^{2t} \phi (\mW^{2t-1} \dots \phi(\mW^1 \vx)))$, where $\mW^i \in \mathbb{R}^{d_{i+1} \times d_i}$, for $i \in \{1,\dots,2t-1\}$, $d_1=d_{2t}=M$, $\vw = (\mW^{t},\dots, \mW^1)$, $\vu = (\mW^{2t},\dots, \mW^{t+1})$ and $\vv = (\mW^{2t},\dots, \mW^1)$. Here, $\phi(x_1,\dots,x_m) = (\phi_1(x_1),\dots,\phi_1(x_k))$ is a non-linear activation function $\phi_1:\mathbb{R} \to \mathbb{R}$ extended for all $m \in \mathbb{N}$ and $(x_1,\dots,x_k) \in \sR^k$. We assume that $\phi_1:\mathbb{R} \to \mathbb{R}$ is a homeomorphism (i.e., $\phi_1$ is invertible, continuous and $\phi^{-1}_1$ is also continuous). The encoder $f_{\vw}$ is the composition of the first $t$ layers and the decoder $g_{\vu}$ is composed of the last $t$ layers of $h_{\vv}$. 

Following~\cite{DBLP:journals/jmlr/AchilleS18}, we assume that the posterior distribution $p(\mW^k_{i,j}|\sS_B)$ is defined as a Gaussian dropout,
\begin{equation}
\mW^k_{i,j}|\sS_B \sim \epsilon^{k}_{i,j} \cdot \hat{\mW}^k_{i,j}
\end{equation}
where $\epsilon^i \in \mathbb{R}^{d_{i+1} \times d_i}$ and $\epsilon^k_{i,j} \sim \log\mathcal{N}(-\alpha/2,\alpha)$, for $k \in \{1,\dots,t\}$. We consider that the mean and variance of $\log\mathcal{N}(-\alpha/2,\alpha)$ are $1$ and $\exp(\alpha)-1$ respectively. Here, $\hat{\mW}^k$ is a learned mean of the $k$'th layer of the encoder. We denote the output of the $k$'th layer of $f_{\vw}$ by $\vz^k$, i.e., $\vz^k = \phi(\mW^k \phi (\mW^{k-1} \dots \phi(\mW^1 \vx)))$.

\paragraph{Implicit Emergence of Disentangled Representations}

The following lemma is a corollary of Proposition~5.2 in~\citep{DBLP:journals/jmlr/AchilleS18} for our model. 

\begin{restatable}{lemma}{emerge}\label{lem:emerge} Let $k \in \{1,\dots,t\}$, $\vy^k = \mW^k \phi (\mW^{k-1} \dots \phi(\mW^1 \vb))$ and $\vz^k = \phi(\vy^k)$, for $\mW^k = \epsilon^k \bigodot \hat{\mW}^k$, where $\epsilon^k_{i,j} \sim \log \mathcal{N}(−\alpha_k/2, \alpha_k)$. Further, assume that the marginals of $\mathbb{P}[\vy^k]$ and $\mathbb{P}[\vy^k|\vz^{k-1}]$ are Gaussians, the components of $\vz^{k-1}$ are uncorrelated and that their kurtosis is uniformly bounded. Let $q(\alpha) := -\frac{1}{2} \log\left(1-\exp\left(-\alpha\right) \right)$. Then,
\begin{equation}
TC(\vz^k) \leq \textnormal{dim}(\vz^k) \cdot q(\alpha_k)  - I(h_{\vv}(\vb);\vb) + \mathcal{O}\left(\frac{\textnormal{dim}(\vz^{k})}{\textnormal{dim}(\vz^{k-1})} \right)
\end{equation}
\end{restatable}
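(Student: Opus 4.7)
The strategy is to adapt Proposition~5.2 of~\citep{DBLP:journals/jmlr/AchilleS18} to the autoencoder setting and then invoke the data-processing inequality to replace the resulting representation--input mutual information by $I(h_{\vv}(\vb);\vb)$. Since $\phi$ is applied component-wise and each $\phi_1$ is a homeomorphism, Lem.~\ref{lem:TC} gives $TC(\vz^k) = TC(\phi(\vy^k)) = TC(\vy^k)$, so it suffices to bound $TC(\vy^k)$. The key identity I would then use is
\begin{equation*}
TC(\vy^k) \;=\; TC(\vy^k\mid \vb) \;+\; \sum_{i=1}^{\textnormal{dim}(\vy^k)} I(\vy^k_i;\vb) \;-\; I(\vy^k;\vb),
\end{equation*}
obtained by expanding both total correlations as sums of (conditional) differential entropies and collapsing. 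Each of the three resulting terms will be bounded separately and matched against a term in the claim.

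For the per-coordinate mutual informations, fix $i$ and write $\vy^k_i = \sum_j \epsilon^k_{i,j}\hat{\mW}^k_{i,j}\vz^{k-1}_j$, so conditionally on $\vz^{k-1}$ the coordinate $\vy^k_i$ is a noisy observation of the signal $\sum_j \hat{\mW}^k_{i,j}\vz^{k-1}_j$ under multiplicative log-normal noise with relative variance $\exp(\alpha_k)-1$. Under the Gaussian marginal and Gaussian conditional-marginal assumptions, the data-processing inequality along the Markov chain $\vb \to \vz^{k-1} \to \vy^k_i$ together with the Gaussian-channel capacity formula yield $I(\vy^k_i;\vb) \leq I(\vy^k_i;\vz^{k-1}) \leq \tfrac{1}{2}\log\!\bigl(\exp(\alpha_k)/(\exp(\alpha_k)-1)\bigr) = q(\alpha_k)$, which after summation over $i$ gives the $\textnormal{dim}(\vz^k)\cdot q(\alpha_k)$ term. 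The $-I(\vy^k;\vb)$ term is handled by data processing: $h_{\vv}(\vb)$ is a deterministic function of $\vz^k = \phi(\vy^k)$ and $\phi$ is a homeomorphism, so $I(\vy^k;\vb) = I(\vz^k;\vb) \geq I(h_{\vv}(\vb);\vb)$, and consequently $-I(\vy^k;\vb) \leq -I(h_{\vv}(\vb);\vb)$.

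The main obstacle is the residual term $TC(\vy^k\mid \vb)$, which is exactly where both the uniform kurtosis hypothesis on $\vz^{k-1}$ and the dimension ratio $\textnormal{dim}(\vz^k)/\textnormal{dim}(\vz^{k-1})$ enter the picture. Conditionally on $\vb$ (and hence on $\vz^{k-1}$), the noises $\{\epsilon^k_{i,j}\}$ are mutually independent across both $i$ and $j$, so the coordinates of $\vy^k$ are conditionally independent sums of many independent log-normal terms. Following the moment-matching/Central Limit Theorem argument of Achille--Soatto, the residual pairwise dependence left over from the Gaussian-marginal approximation decays with $\textnormal{dim}(\vz^{k-1})$ at a rate controlled by the uniform kurtosis bound, and careful bookkeeping over coordinates yields $TC(\vy^k\mid \vb) = \mathcal{O}(\textnormal{dim}(\vz^k)/\textnormal{dim}(\vz^{k-1}))$. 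Substituting the three bounds into the displayed identity produces the stated inequality. Apart from this CLT-style step (which is the technical core imported from~\citep{DBLP:journals/jmlr/AchilleS18}), the remainder of the proof is essentially verifying that the autoencoder setting does not disturb any of the Gaussian-marginal computations that their proposition was originally built on.
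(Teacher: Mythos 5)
Your overall skeleton matches the paper's: reduce $TC(\vz^k)$ to $TC(\vy^k)$ via the componentwise-homeomorphism invariance (Lem.~\ref{lem:TC}), obtain a bound of the form $TC(\vy^k)+(\text{mutual information term})\leq \textnormal{dim}(\vy^k)\,q(\alpha_k)+\mathcal{O}(\textnormal{dim}(\vz^k)/\textnormal{dim}(\vz^{k-1}))$, and finish with the data-processing inequality to replace the mutual information term by $I(h_{\vv}(\vb);\vb)$. The difference is that the paper invokes Proposition~5.2 of Achille--Soatto as a black box (which directly gives $TC(\vy^k)+I(\vy^k;\vz^{k-1})\leq \textnormal{dim}(\vy^k)\,q(\alpha_k)+\mathcal{O}(\cdot)$ and then uses the four-term Markov chain $\vb\to\vz^{k-1}\to\vy^k\to h_{\vv}(\vb)$), whereas you attempt to re-derive that proposition. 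Your final DPI step, via $I(\vy^k;\vb)\geq I(h_{\vv}(\vb);\vb)$ rather than $I(\vy^k;\vz^{k-1})\geq I(h_{\vv}(\vb);\vb)$, is a valid alternative.

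The re-derivation, however, has a genuine gap: you decompose $TC(\vy^k)=TC(\vy^k\mid\vb)+\sum_i I(\vy^k_i;\vb)-I(\vy^k;\vb)$, conditioning on the \emph{input} $\vb$, and then assert that ``conditionally on $\vb$ (and hence on $\vz^{k-1}$)'' the coordinates of $\vy^k$ are independent. This is false for $k\geq 2$: the weights $\mW^1,\dots,\mW^{k-1}$ carry log-normal perturbations, so $\vz^{k-1}$ is \emph{not} a deterministic function of $\vb$, and conditioning on $\vb$ leaves the coordinates $\vy^k_i=\sum_j \epsilon^k_{i,j}\hat{\mW}^k_{i,j}\vz^{k-1}_j$ coupled through the shared randomness of $\vz^{k-1}$. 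Consequently $TC(\vy^k\mid\vb)$ need not be small, and the CLT/kurtosis argument you invoke does not control it --- that argument controls the error in the Gaussian approximation of the per-coordinate channel $\vz^{k-1}\to\vy^k_i$, not a residual conditional total correlation given $\vb$. The fix is to condition on $\vz^{k-1}$ instead: then $TC(\vy^k\mid\vz^{k-1})=0$ exactly (the rows $\epsilon^k_{i,\cdot}$ are independent across $i$), the identity becomes $TC(\vy^k)+I(\vy^k;\vz^{k-1})=\sum_i I(\vy^k_i;\vz^{k-1})$, and the $\mathcal{O}(\textnormal{dim}(\vz^k)/\textnormal{dim}(\vz^{k-1}))$ term arises from the per-coordinate bound $I(\vy^k_i;\vz^{k-1})\leq q(\alpha_k)+\mathcal{O}(1/\textnormal{dim}(\vz^{k-1}))$ --- which is precisely the content of the cited Proposition~5.2, and where the uncorrelatedness and bounded-kurtosis hypotheses are actually consumed. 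With that substitution your argument closes, and the concluding DPI step goes through either via $I(\vy^k;\vz^{k-1})\geq I(h_{\vv}(\vb);\vb)$ as in the paper or via your $I(\vy^k;\vb)\geq I(h_{\vv}(\vb);\vb)$.
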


\begin{proof} By Proposition~5.2 in~\cite{DBLP:journals/jmlr/AchilleS18}, we have:
\begin{equation}
\frac{TC(\vy^k) + I(\vy^k;\vz^{k-1})}{\textnormal{dim}(\vy^k)} \leq q(\alpha_k) + \mathcal{O}\left(\frac{1}{\textnormal{dim}(\vz^{k-1})} \right)
\end{equation}
By Lem.~\ref{lem:TC}, we have, $TC(\vy^k) = TC(\vz^k)$. In addition, $\textnormal{dim}(\vy^{k}) = \textnormal{dim}(\vz^{k})$. Therefore,
\begin{equation}\label{eq:beforeBound}
TC(\vz^k) + I(\vy^k;\vz^{k-1}) \leq \textnormal{dim}(\vz^k) \cdot q(\alpha_k) + \mathcal{O}\left(\frac{\textnormal{dim}(\vz^k)}{\textnormal{dim}(\vz^{k-1})}\right) 
\end{equation}
Finally, by the data processing inequality, for $X_1 := \vb$, $X_2 := \vz^{k-1}$, $X_3 := \vy^{k}$ and $X_4 := h_{\vv}(\vb)$, we have, $I(h_{\vv}(\vb);\vb) \leq I(\vy^k;\vz^{k-1})$.  The bound follows from Eq.~\ref{eq:beforeBound} and the last observation.  
\end{proof}

Lem.~\ref{lem:emerge} provides an upper bound on the total correlation of the $k$'th layer of the autoencoder $h_{\vv}(\vb)$. This bound assumes that the marginal distributions of $\vy^{k}$ and $\vy^{k}|\vz^{k-1}$ are Gaussians. This is a reasonable assumption if $\textnormal{dim}(\vz^{k-1})$ is large by the central limit theorem. We also assume that there are no pair-wise linear correlations between the components of $\vz^{k-1}$. In some sense, this assumption can be viewed as minimality of $\vz^{k-1}$. Informally, if there is a strong linear correlation between two components of $\vz^{k-1}$, then, we can throw away one of them and keep most of the information. On the other hand, if the components of $\vz^{k-1}$ are uncorrelated, the existence of a dimensionality reducing linear transformation $\mW$ such that $I(\mW \vz^{k-1};\vz^{k-1}) = H(\vz^{k-1})$ is still a possibility (for a concrete example, see Lem.~\ref{lem:XplusY}). Hence, the next layer, $\vz^{k}$, is still useful, in order to compress the representation and can still preserve all the information.  We also assume that the kurtosis of the components of $\vz^{k-1}$ is uniformly bounded. This is a technical hypothesis that is always satisfied, if the components of $\vz^{k-1}$ are sub-Gaussian or with uniformly bounded support.  

The function $q(\alpha_k)$ is monotonically increasing as $\alpha_k$ tends to $0$. Therefore, this bound realizes a trade-off between the mutual information of $h_{\vv}(\vb)$ and $\vb$ and the amount of perturbations in the encoder. If the perturbations in the encoder are stronger, then, the ability of the autoencoder to reconstruct $\vb$ decays. On the other hand, if the perturbations are small, then, $q(\alpha)$ increases.

We consider that the bound decreases, as $I(h_{\vv}(\vb);\vb)$ increases. It is reasonable to believe that since $h_{\vv}(\vb)$ is an autoencoder that is being trained to reconstruct $\vb$, $I(h_{\vv}(\vb);\vb)$ is maximized implicitly. The problem of training an autoencoder that maintains a reconstruction that has high mutual information with respect to the input has recently attracted a considerable attention. Several methods for maximizing this term explicitly were proposed in~\citep{DBLP:journals/corr/ZhaoSE17b,phuong2018the}. In the following lemma, we show that if the samples of $D_B$ are well separated, then, the mutual information $I(h_{\vv}(\vb);\vb)$ is implicitly maximized as the reconstruction, $h_{\vv}(\vb)\approx \vb$ improves. 

\begin{restatable}{lemma}{riskinfo}\label{lem:riskinfo} Let $\vb \sim D_B$ be a random vector over a discrete set $\sX_B \subset \mathbb{R}^M$ and $h_{\vv} : \mathbb{R}^M \to \mathbb{R}^M$ an autoencoder. Assume that $\forall \vx_1\neq \vx_2 \in \sX_B: \|\vx_1-\vx_2\|_1 > \Delta$ and $R_{D_B}[h_{\vv}(\vb),\textnormal{Id}]\leq \frac{\Delta}{2}$. Then, 
\begin{equation}
I(h_{\vv}(\vb);\vb) \geq \left(1-\frac{R_{D_B}[h_{\vv}(\vb),\textnormal{Id}]}{\Delta} \right) H(\vb) - \sqrt{R_{D_B}[h_{\vv}(\vb),\textnormal{Id}]}
\end{equation}
\end{restatable}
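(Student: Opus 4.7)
The strategy is to convert the expected reconstruction error into a high-probability statement about recovering $\vb$ from $h_{\vv}(\vb)$, and then plug that into Lem.~\ref{lem:delta1}, which is exactly the Fano-type inequality tailored for the separated-support setting.

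First I would apply Markov's inequality to the non-negative random variable $\|h_{\vv}(\vb)-\vb\|_1$:
\begin{equation*}
\mathbb{P}\bigl[\|h_{\vv}(\vb)-\vb\|_1 > \Delta\bigr] \;\leq\; \frac{\mathbb{E}\|h_{\vv}(\vb)-\vb\|_1}{\Delta} \;=\; \frac{R_{D_B}[h_{\vv},\mathrm{Id}]}{\Delta}.
\end{equation*}
Setting $q := 1 - R_{D_B}[h_{\vv},\mathrm{Id}]/\Delta$, the hypothesis $R_{D_B}[h_{\vv},\mathrm{Id}] \leq \Delta/2$ gives $q \geq 1/2$, and by construction $\mathbb{P}[\|h_{\vv}(\vb)-\vb\|_1 \leq \Delta] \geq q$. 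This is precisely the premise of Lem.~\ref{lem:delta1}, whose conclusion yields
\begin{equation*}
I(h_{\vv}(\vb);\vb) \;\geq\; q\,H(\vb) - H(q) \;=\; \left(1-\frac{R_{D_B}[h_{\vv},\mathrm{Id}]}{\Delta}\right) H(\vb) - H(q).
\end{equation*}

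The remaining task is to bound the binary entropy $H(q)$ by $\sqrt{R_{D_B}[h_{\vv},\mathrm{Id}]}$. By Lem.~\ref{lem:entropyUpper}, $H(q) \leq 2\log(2)\sqrt{q(1-q)} \leq 2\log(2)\sqrt{1-q} = 2\log(2)\sqrt{R_{D_B}[h_{\vv},\mathrm{Id}]/\Delta}$. Absorbing the $1/\sqrt{\Delta}$ factor and the constant $2\log 2$ (which is less than $\sqrt{2}$) into the implicit scaling of the separation parameter $\Delta$ then yields the stated bound $\sqrt{R_{D_B}[h_{\vv},\mathrm{Id}]}$.

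The main obstacle, as I see it, is not conceptual but bookkeeping: the clean form of the bound implicitly absorbs the constant $2\log(2)/\sqrt{\Delta}$ into the $\sqrt{R}$ term, so one must either assume $\Delta$ is normalized (e.g.\ $\Delta \geq 4\log^2 2$), or restate the bound as $H(q) \leq c\sqrt{R/\Delta}$ for an explicit constant $c$. All other ingredients are off-the-shelf: Markov's inequality, the pre-established Lem.~\ref{lem:delta1} (which packages the Fano-style argument for $\Delta$-separated supports), and the standard concavity bound on binary entropy of Lem.~\ref{lem:entropyUpper}.
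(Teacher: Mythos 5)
Your proof follows exactly the paper's argument: Markov's inequality (which the paper phrases as a short proof by contradiction) yields $q = 1 - R_{D_B}[h_{\vv},\mathrm{Id}]/\Delta \geq 1/2$, then Lem.~\ref{lem:delta1} gives $I(h_{\vv}(\vb);\vb) \geq qH(\vb) - H(q)$, and Lem.~\ref{lem:entropyUpper} bounds $H(q)$. Your bookkeeping concern at the end is in fact warranted: the paper's own proof asserts $2\log(2)\sqrt{R/\Delta} \leq \sqrt{R/\Delta}$, which is false since $2\log 2 > 1$, and it concludes with $\sqrt{R/\Delta}$ rather than the $\sqrt{R}$ appearing in the lemma statement, so the clean form of the bound indeed requires exactly the normalization of $\Delta$ (or the explicit constant) that you propose.
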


\begin{proof} First, assume by contradiction that: 
\begin{equation}
\mathbb{P}[\|h_{\vv}(\vb) - \vb\|_1 \leq \Delta] < \left(1-\frac{R_{D_B}[h_{\vv}(\vb),\textnormal{Id}]}{\Delta} \right)
\end{equation}
In other words, $\mathbb{P}[\|h_{\vv}(\vb) - \vb\|_1 > \Delta] > R_{D_B}[h_{\vv}(\vb),\textnormal{Id}] /\Delta$. In particular, we arrive at a contradiction: 
\begin{equation}
\begin{aligned}
R_{D_B}[h_{\vv},\textnormal{Id}] = \mathbb{E}_{\vb} [\|h_{\vv}(\vb) - \vb\|_1] \geq \mathbb{P}[\|h_{\vv}(\vb) - \vb\|_1 > \Delta] \cdot \Delta >  R_{D_B}[h_{\vv},\textnormal{Id}]\\
\end{aligned}
\end{equation}
By the above contradiction and the hypothesis that $\frac{R_{D_B}[h_{\vv}(\vb),\textnormal{Id}]}{\Delta} \leq 1/2$, we have, $q \geq \left(1-\frac{R_{D_B}[h_{\vv}(\vb),\textnormal{Id}]}{\Delta} \right) \geq 1/2$. Therefore, by Lem.~\ref{lem:delta1},
\begin{equation}
I(h_{\vv}(\vb);\vb) \geq \left(1-\frac{R_{D_B}[h_{\vv}(\vb),\textnormal{Id}]}{\Delta} \right) H(\vb) - H\left(1-\frac{R_{D_B}[h_{\vv}(\vb),\textnormal{Id}]}{\Delta} \right)
\end{equation}
Additionally, by Lem.~\ref{lem:entropyUpper}, we have:
\begin{equation}
\begin{aligned}
H\left(1-\frac{R_{D_B}[h_{\vv}(\vb),\textnormal{Id}]}{\Delta} \right) & \leq 2\log(2) \sqrt{\left(1-\frac{R_{D_B}[h_{\vv}(\vb),\textnormal{Id}]}{\Delta} \right) \cdot \frac{R_{D_B}[h_{\vv}(\vb),\textnormal{Id}]}{\Delta}}\\
&\leq 2\log(2) \sqrt{\frac{R_{D_B}[h_{\vv}(\vb),\textnormal{Id}]}{\Delta}} \leq \sqrt{\frac{R_{D_B}[h_{\vv}(\vb),\textnormal{Id}]}{\Delta}}
\end{aligned}
\end{equation}
Finally, 
\begin{equation}
I(h_{\vv}(\vb);\vb) \geq \left(1-\frac{R_{D_B}[h_{\vv}(\vb),\textnormal{Id}]}{\Delta} \right) H(\vb) - \sqrt{\frac{R_{D_B}[h_{\vv}(\vb),\textnormal{Id}]}{\Delta}}
\end{equation}
\end{proof}

The above lemma asserts that if the samples in $D_B$ are well separated, whenever the autoencoder has a small expected reconstruction error $R_{D_B}[h_{\vv}(\vb),\textnormal{Id}]$, then, the mutual information $I(h_{\vv}(\vb);\vb)$ is at least a large portion of $H(\vb)$. Therefore, we conclude that if the autoencoder generalizes well, then, it also maximizes the mutual information $I(h_{\vv}(\vb);\vb)$. Finally, we note that we cannot directly apply Lem.~\ref{lem:riskinfo} to bound the mutual information in Lem.~\ref{lem:emerge}. That is because, in Lem.~\ref{lem:emerge}, we assume that the components of $\vy^{k}$ are distributed normally, which implies that the distribution of $\vb$ must be continuous. On the other hand, in Lem.~\ref{lem:riskinfo} we assume that $\vb$ is distributed according to a discrete distribution. In order to reduce this friction, instead of assuming that the each component of $\vz^{k}$ is distributed according to a normal distribution, we can assume that it is distributed according to a discrete approximation of a normal distribution.

\end{document}